\documentclass{article} 
\usepackage{iclr2027_conference,times}

\usepackage{amsmath,amsfonts,bm}

\def\eqref#1{equation~\ref{#1}}

\def\1{\bm{1}}

\DeclareMathAlphabet{\mathsfit}{\encodingdefault}{\sfdefault}{m}{sl}
\SetMathAlphabet{\mathsfit}{bold}{\encodingdefault}{\sfdefault}{bx}{n}

\usepackage{hyperref}
\usepackage{url}
\usepackage[utf8]{inputenc} 
\usepackage[T1]{fontenc}    
\usepackage{hyperref}       
\usepackage{url}            
\usepackage{booktabs}       
\usepackage{amsfonts}       
\usepackage{nicefrac}       
\usepackage{microtype}      
\usepackage[dvipsnames]{xcolor}         
\usepackage[nointegrals]{wasysym}
\usepackage{graphicx}
\usepackage{wrapfig}
\usepackage{makecell}
\usepackage{multirow}
\usepackage{subcaption}
\usepackage{amsmath}
\usepackage{amssymb}

\usepackage[ruled,vlined]{algorithm2e}

\usepackage{amsmath,amssymb,amsthm}
\newtheorem{theorem}{Theorem}
\newtheorem{lemma}{Lemma}

\newtheorem{corollary}{Corollary}

\usepackage{enumitem}

\everypar{\looseness=-1}

\title{Simultaneous Neural Optimal Transport}

\author{Milena Gazdieva \thanks{Corresponding author} \\
Applied AI Institute\\
AXXX \\
Moscow, Russia\\
\texttt{milenagazdieva@gmail.com} \\
\And
Kirill Sokolov \\
Lomonosov Moscow State University \\
Moscow, Russia \\
\AND
Jiawei Chen\\
MIRIAI\thanks{Moscow Independent Research Institute of Artificial Intelligence}\\
Moscow, Russia
\And
Evgeny Burnaev \\
Applied AI Institute\\
AXXX \\
Moscow, Russia
\And
Alexander Korotin\\
Applied AI Institute\\
AXXX \\
Moscow, Russia
}

\iclrfinalcopy 

\begin{document}

\maketitle

\begin{abstract}
Optimal Transport (OT) provides a principled framework for learning transformations between probability distributions from unpaired samples. In many applications, however, a single transformation must map several source distributions to a common target distribution.
For example, image restoration might require handling different types of degradation without knowing the degradation of each input at inference time. Simple approaches of pooling the source distributions only encourage alignment with the target at the aggregate level and may leave individual sources misaligned. In our paper, we consider the simultaneous OT problem which formalizes the task of learning a shared transport map that minimizes the average transport cost while aligning each source distribution with a prescribed target. 
We propose a neural method for solving the simultaneous OT problem by learning a shared transport map that minimizes the average transport cost while aligning each source distribution with a prescribed target. We derive a max-min formulation for learning this map. We illustrate its application to image restoration, where a single model handles multiple degradation types using a common collection of clean target images.
\end{abstract}

\begin{figure*}[h]
    \centering
    \includegraphics[width=\textwidth]{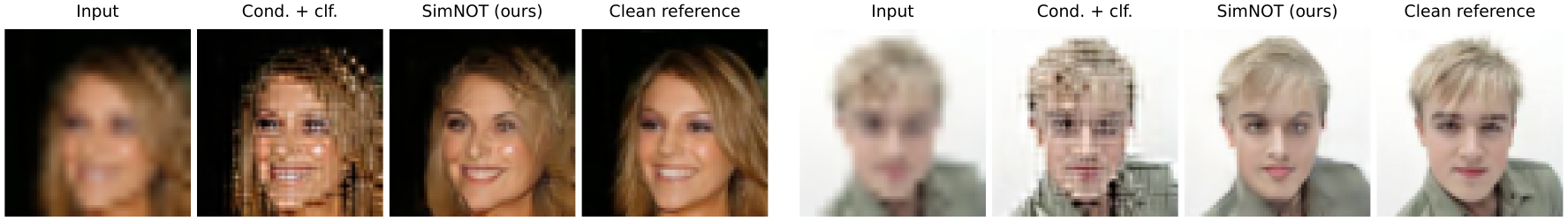}
    \caption{
        Restoration of selected CelebA $64\!\times\! 64$ test images under
        bilinear downsampling with an unseen factor
        ($\times 3$ at test time versus $\times 4$ during training).
        Each group shows the degraded input, conditional OT
        with classifier-based routing, SimNOT (ours),
        and the clean reference.
    }
    \label{fig:teaser}
\end{figure*}

\section{Introduction}
\label{sec:introduction}

Learning transformations between probability distributions from unpaired samples underlies a range of applications, including image-to-image translation~\citep{zhu2017unpaired}, voice conversion~\citep{chun2023non}, and prediction of single-cell perturbation responses~\citep{bunne2023learning}.
In many practical scenarios, the input data originate from several distributions, while a single model is expected to transform them into a common target distribution. The challenge is to learn one transformation that produces suitable outputs for each source distribution while preserving relevant properties of the inputs.
For example, all-in-one image restoration aims to handle multiple types of degradation within one model without requiring the degradation type at inference time~\citep{potlapalli2023promptir}.
In the unpaired setting, the learner has access to degraded and clean images but not to their correspondences.

Optimal Transport (OT) provides a mathematical framework for learning such transformations~\citep{peyre2019computational}.
It seeks a transport map or plan that aligns the source and target distributions while minimizing a prescribed transport cost.
The cost specifies which input--output correspondences are preferable and can be chosen to encourage preservation of particular attributes.
Neural OT methods make it possible to approximate transport maps between distributions available only through samples and apply the learned maps to previously unseen inputs~\citep{fan2023neural, korotin2023kernel,korotin2023neural}.
These methods provide a natural starting point for learning transformations shared across several source distributions.

A straightforward approach is to pool the source datasets and learn an OT map from their mixture to the target distribution.
However, matching the transformed mixture to the target does not guarantee that the transformed distribution of each source matches the target individually.
For example, different sources may be mapped to different parts of the target distribution, producing a correct aggregate distribution despite substantial discrepancies for individual sources.
Alternatively, learning a separate map for each source does not yield a single transformation that can be applied without selecting a source-specific model.
This motivates learning a shared map while explicitly accounting for each source distribution.

The \textit{simultaneous optimal transport} (SOT) framework~\citep{wang2022simultaneous} formalizes transport under multiple distributional constraints using a common map or stochastic kernel.
We consider its many-to-one setting: a single transformation must align several source distributions with a prescribed target while minimizing the average transport cost.
The shared transformation couples the source-specific transport tasks, and the individual alignment requirements retain information that is lost when the sources are treated only as a mixture.
Our focus is on the computational setting in which the distributions are unknown and available through unpaired samples, and the learned transformation must generalize to new inputs.

To develop a practical learning method, we consider an \textit{unbalanced} formulation of simultaneous transport, replacing hard marginal constraints with divergence penalties.
Unbalanced OT has already been used to learn generative models and transport maps between a pair of distributions~\citep{choi2024generative,gazdieva2024light, yang2018scalable, klein2024genot, eyring2024unbalancedness}.
In our setting, this relaxation provides an adjustable trade-off between transport cost and marginal agreement while retaining the shared transport structure and separate objectives for the sources.
We derive a variational training objective and use it to learn a single neural transport map together with source-specific potentials.
The source identity is used to select the appropriate potential during training, whereas the transport map itself is shared and does not require this identity at inference time.

\noindent\textbf{Contributions.}
We develop \textbf{Simultaneous Neural Optimal Transport
(SimNOT)}, a continuous solver for learning a shared
transport map from multiple source distributions to
a prescribed target using unpaired samples.
We introduce a divergence-based unbalanced formulation
of simultaneous OT, establish its exact semi-dual
representation (\wasyparagraph\ref{sec:objective}), and provide neural approximation guarantees (\wasyparagraph\ref{sec:theoretical_properties}).
These results yield a neural solver with one shared map
and source-specific potentials, requiring no source labels
at inference time (§\ref{sec:training}).
We demonstrate its effectiveness on simultaneous
Gaussian-to-Swiss-roll transport and unpaired image
restoration with multiple degradation types, including
evaluation under unseen degradation strengths
(§\ref{sec:experiments}).

\noindent\textbf{Notation.}
We take the source and target domains to be nonempty closed sets
$\mathcal X\subseteq\mathbb R^{D_x}$ and $\mathcal Y\subseteq\mathbb R^{D_y}$. We write $\mathcal Z$ for an auxiliary noise space.
For a Borel space $E$, $\mathcal P(E)$ and $\mathcal M_+(E)$ denote the sets of Borel probability measures and finite non-negative Borel measures on $E$, respectively, and $C_b(E)$ denotes the bounded continuous real-valued functions on $E$.
The source distributions are denoted by $\mathbb P_1,\ldots,\mathbb P_K\in\mathcal P(\mathcal X)$, and the common target distribution by $\mathbb P^\ast\in\mathcal P(\mathcal Y)$.
For a measurable map $T$, $T_\#\mu$ denotes the pushforward of a measure $\mu$.
We represent a stochastic map by $T:\mathcal X\times\mathcal Z\to\mathcal Y$, where $z\sim\mathbb S\in\mathcal P(\mathcal Z)$ is independent of the input; its output distribution for source $\mathbb P_k$ is $T_\#(\mathbb P_k\otimes\mathbb S)$.
For a probability kernel $\gamma(\cdot\mid x)$ from $\mathcal X$ to $\mathcal Y$, we write $\gamma_\#\mu$ for the output measure $\int \gamma(\cdot\mid x)\,d\mu(x)$.
For $\gamma\in\mathcal M_+(\mathcal X\times\mathcal Y)$, its marginals are denoted by $\gamma_x$ and $\gamma_y$.
We use $\Pi(\mu,\nu)$ for the set of finite non-negative measures on the product space with marginals $\mu$ and $\nu$ (when these marginals have the same total mass); in particular, for probability marginals these are probability couplings.

\section{Related Work}
\label{sec:related_work}

We review the most closely related OT formulations and
solvers below. \textit{\underline{Extended related work}} on OT barycenters and
all-in-one image restoration is discussed in
Appendix~\ref{app:related_work}.

\textbf{Learning optimal transport maps.}
OT has been used in generative modeling both as a discrepancy
between generated and target distributions and as a principle
for learning the transformation itself.
For example, Wasserstein GANs~\citep{arjovsky2017wasserstein}
use an OT discrepancy as a training loss, but do not require
the generator to minimize the transport cost between its inputs
and outputs.
More closely related to our work are continuous neural OT
solvers, which learn transport maps or plans from
samples~\citep{makkuva2020optimal,rout2022generative,
korotin2023neural}.
These methods exploit OT duality to construct optimization
objectives for neural potentials and transport maps, with
applications to generation and unpaired translation.
Partial transport formulations also permit relaxing distribution
matching while controlling the similarity between inputs and
outputs~\citep{gazdieva2023extremal}.
These works consider transport between a pair of distributions;
our focus is on learning one transformation jointly for several
sources.

\textbf{Simultaneous optimal transport.}
\citet{wang2022simultaneous} introduce simultaneous OT as
transport between vector-valued measures: a common map or
transport kernel must transport several source measures to
their respective targets.
They study Monge and Kantorovich formulations, existence,
and duality.
The balanced problem with a common target considered in our
work is a particular case of this framework.
Its defining requirement is that the same transformation
satisfy the transport constraints for every source.
Learning independent pairwise OT maps does not impose this
requirement, while matching a pooled source distribution to
the target does not ensure that each source is matched
individually.
We build on this perspective to develop a neural method for
learning shared transformations from samples.
We also consider divergence-based marginal relaxation;
this differs from the inequality-constrained unbalanced
formulation studied by \citet{wang2022simultaneous}.

\textbf{Unbalanced optimal transport.}
Unbalanced OT replaces exact marginal constraints with
penalties for deviations from the prescribed
measures~\citep{liero2018optimal}.
Among continuous solvers, \citet{yang2018scalable}
propose an adversarial approach that jointly learns
a transport map and a source mass scaling function.
UOTM~\citep{choi2024generative} learns a transport
network and a potential through a semi-dual objective.
The latter approach is particularly relevant to
the optimization principles used in our work.
Both methods consider transport between two measures.
Our method combines marginal relaxation with a shared
transformation and source-specific potentials.
The role of the unbalanced formulation here is to provide
a flexible transport objective for simultaneous learning.

\section{Background}
\label{sec:background}

In this section, we recall the OT formulations relevant to our
work and specify the simultaneous OT problem with a
common target. We then introduce the marginal relaxation
considered in this paper. For background on OT and unbalanced
OT, we refer to \citet{villani2008optimal} and
\citet{chizat2017unbalanced}, respectively.

\subsection{Optimal Transport}
\label{sec:background_ot}

Let $\mathbb P\in\mathcal P(\mathcal X)$,
$\mathbb Q\in\mathcal P(\mathcal Y)$ be probability measures,
$c:\mathcal X\times\mathcal Y\to\mathbb R_+$
$-$ continuous function.

\textbf{Monge formulation.}
The Monge OT problem seeks a measurable map
$T:\mathcal X\to\mathcal Y$ that transports $\mathbb P$
to $\mathbb Q$ while minimizing the transport cost given by function $c(\cdot,\cdot)$:
\begin{equation}
\label{eq:monge}
\inf_{T:\,T_{\#}\mathbb P=\mathbb Q}
\int_{\mathcal X}c(x,T(x))\,d\mathbb P(x).
\end{equation}
A minimizer $T^\ast$ of~\eqref{eq:monge},
when it exists, is called an optimal transport (OT) map.

\textbf{Kantorovich formulation.}
Monge OT problem may admit no feasible map or no minimizer.
Thus, it is common to consider its Kantorovich relaxation which allows splitting the mass of
each input point:
\begin{equation}
\label{eq:kantorovich}
\operatorname{OT}_c(\mathbb P,\mathbb Q)
=
\inf_{\pi\in\Pi(\mathbb P,\mathbb Q)}
\int_{\mathcal X\times\mathcal Y}c(x,y)\,d\pi(x,y).
\end{equation}
Here $\Pi(\mathbb P,\mathbb Q)$ is the set of probability
measures on $\mathcal X\times\mathcal Y$ with marginals
$\pi_x=\mathbb P$ and $\pi_y=\mathbb Q$.
Under the assumptions of this paper, a minimizer $\pi^\ast$
of~\eqref{eq:kantorovich} always exists, but need not be
unique~\citep{villani2008optimal}.
Such a minimizer is called an optimal transport plan.

Disintegration gives $d\pi(x,y)=d\pi_x(x)\,d\pi(y\mid x)$,
where $\pi(\cdot\mid x)$ is the conditional distribution
of outputs for an input $x$.
An OT plan $\pi^\ast$ thus defines a stochastic transport map
$x\mapsto\pi^\ast(\cdot\mid x)$.
If $\pi^\ast(\cdot\mid x)=\delta_{T^\ast(x)}$,
then $\pi^\ast=(\mathrm{id}_{\mathcal X},T^\ast)_{\#}\mathbb P$
and $T^\ast$ is an OT map.

\textbf{Unbalanced formulation.}
Unbalanced OT (UOT) relaxes the marginal constraints
in~\eqref{eq:kantorovich} by penalizing deviations of the
marginals from $\mathbb P$ and $\mathbb Q$.
For finite non-negative measures $\mu,\nu$ on the same
space, we define the $\psi$-divergence as
\begin{equation}
\label{eq:divergence}
D_\psi(\mu\|\nu)
=
\begin{cases}
\displaystyle
\int \psi\!\left(\frac{d\mu}{d\nu}\right)\,d\nu,
& \mu\ll\nu,\\[1ex]
+\infty,
& \text{otherwise}.
\end{cases}
\end{equation}
Here $\psi:[0,\infty)\to[0,\infty]$ is convex,
lower semicontinuous, vanishes only at $1$, and satisfies
$\lim_{u\to\infty}\psi(u)/u=+\infty$.
We define $D_\phi$ analogously.
Examples include the Kullback--Leibler and $\chi^2$
divergences, generated by $\psi(u)=u\log u-u+1$
and $\psi(u)=(u-1)^2$, respectively.

The UOT problem is given by~\citep{liero2018optimal}
\begin{equation}
\label{eq:uot}
\begin{aligned}
\operatorname{UOT}_{c,\psi,\phi}(\mathbb P,\mathbb Q)
=
\inf_{\gamma\in\mathcal M_+(\mathcal X\times\mathcal Y)}
\Bigg[
&\int_{\mathcal X\times\mathcal Y}c(x,y)\,d\gamma(x,y)
+D_\psi(\gamma_x\|\mathbb P)
+D_\phi(\gamma_y\|\mathbb Q)
\Bigg],
\end{aligned}
\end{equation}
where $\mathcal M_+(\mathcal X\times\mathcal Y)$ is the set
of finite non-negative measures, and $\gamma_x,\gamma_y$
are the marginals of $\gamma$.
A minimizer $\gamma^\ast$ is called an unbalanced OT plan.
Its total mass is not constrained to equal one.
The strength of the marginal penalties can be adjusted
by scaling the generators $\psi$ and $\phi$.

The balanced problem~\eqref{eq:kantorovich} is recovered
when $\psi$ and $\phi$ are the convex indicators of $\{1\}$:
the divergence terms then enforce
$\gamma_x=\mathbb P$ and $\gamma_y=\mathbb Q$.

As in the balanced case, disintegration gives
$d\gamma(x,y)=d\gamma_x(x)\,d\gamma(y\mid x)$.
The conditional distributions $\gamma^\ast(\cdot\mid x)$
define a stochastic transport map acting on the source
marginal $\gamma_x^\ast$, which may differ from $\mathbb P$.
In the deterministic case,
$\gamma^\ast(\cdot\mid x)=\delta_{T^\ast(x)}$
and $\gamma_y^\ast=T^\ast_{\#}\gamma_x^\ast$.

\subsection{Simultaneous Optimal Transport}
\label{sec:background_sot}

\begin{wrapfigure}{r}{0.4\linewidth}
    \centering\vspace{-6mm}
    \includegraphics[width=1.2\linewidth]{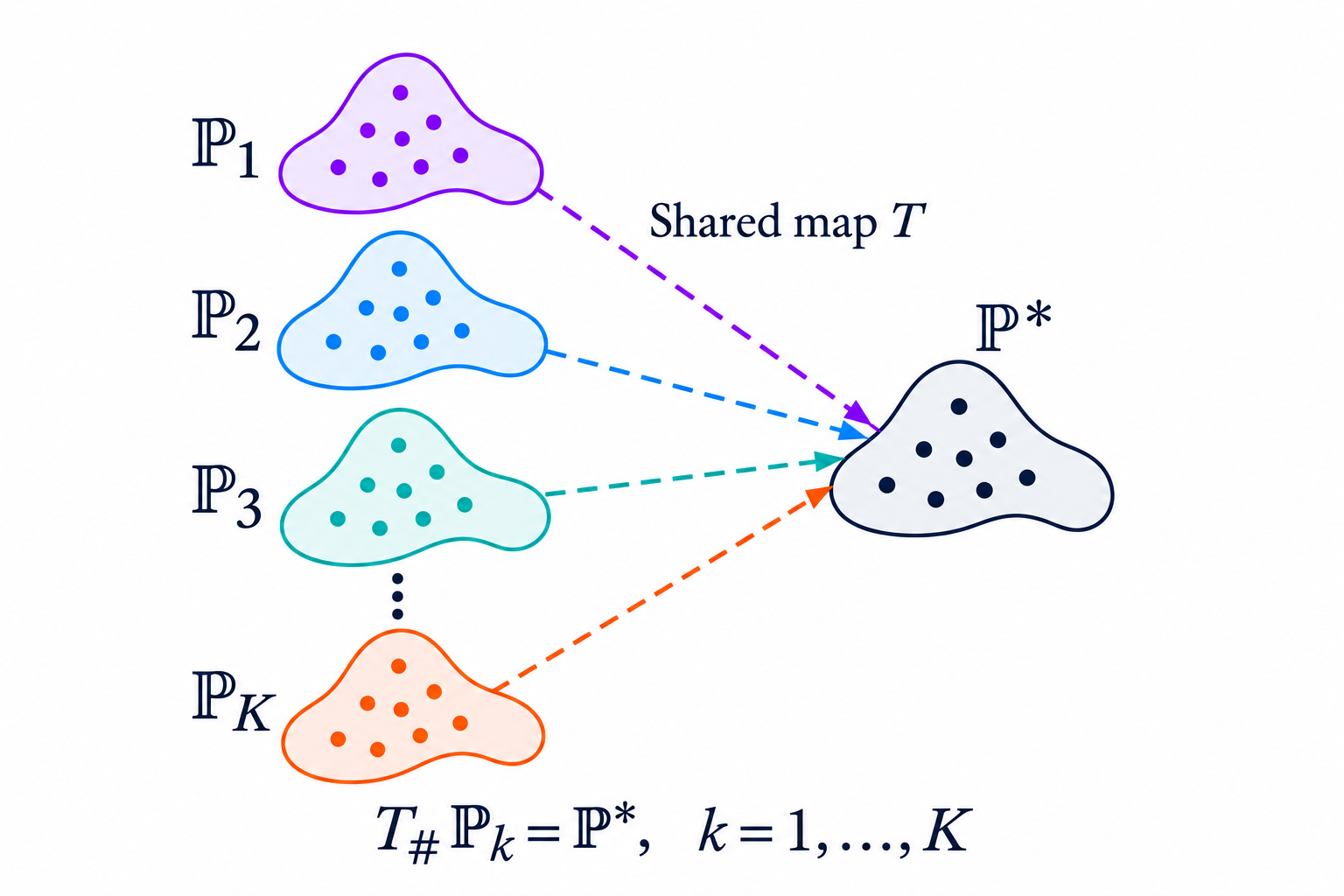}
    \vspace{-6mm}\caption{\centering A schematic illustration of\\ simultaneous OT.}
    \label{fig:simultaneous_ot}
\end{wrapfigure}

Simultaneous OT seeks a shared transport rule that transports
several source measures to their respective target
measures~\citep{wang2022simultaneous}.
Here, we consider the case of $K$ source distributions
$\mathbb P_1,\dots,\mathbb P_K\in\mathcal P(\mathcal X)$
and a common prescribed target distribution
$\mathbb P^\ast\in\mathcal P(\mathcal Y)$.

\textbf{Balanced formulation.}
A simultaneous transport plan from
$\mathbb P_1,\dots,\mathbb P_K$ to $\mathbb P^\ast$
is represented by a family of plans
$\pi_k\in\Pi(\mathbb P_k,\mathbb P^\ast)$
sharing a common conditional distribution
$\pi(\cdot\mid x)$:
\begin{equation}
\label{eq:sot_shared_conditional}
\!\!\!\!\!d\pi_k(x,y)=d\mathbb P_k(x)\,d\pi(y\mid x),
\; k=1,\dots,K.
\end{equation}
Thus, the same conditional distribution of outputs
is used for a given input $x$ across all sources.
The balanced simultaneous OT problem seeks such
a family minimizing the average transport cost: 
\vspace{-4mm}\begin{equation}
\label{eq:sot_balanced}
\inf_{\substack{
\pi(\cdot\mid x):\\
(\pi_k)_y=\mathbb P^\ast,\; k=1,\dots,K
}}
\frac{1}{K}\sum_{k=1}^K
\int_{\mathcal X\times\mathcal Y}
c(x,y)\,d\pi_k(x,y).
\end{equation}
Here, the infimum is taken over shared conditional
probability distributions, with the plans $\pi_k$
defined by~\eqref{eq:sot_shared_conditional}. A minimizing family $\{\pi_k^\ast\}_{k=1}^K$,
when it exists, is called a simultaneous optimal
transport (OT) plan.
Its shared conditional distribution
$\pi^\ast(\cdot\mid x)$ defines a stochastic
transport map $x\mapsto\pi^\ast(\cdot\mid x)$
that transports every $\mathbb P_k$
to $\mathbb P^\ast$.
In the special case of deterministic plans,
$\pi(\cdot\mid x)=\delta_{T(x)}$ for a shared map
$T:\mathcal X\to\mathcal Y$ satisfying
$T_{\#}\mathbb P_k=\mathbb P^\ast$ for all $k$, see Fig. \ref{fig:simultaneous_ot}.
Optimizing~\eqref{eq:sot_balanced} over such plans
yields the simultaneous Monge problem.

When the source distributions have pairwise disjoint
supports, simultaneous OT reduces to independent
pairwise OT problems:
the corresponding transport maps, when they exist,
can be combined into a single map.
For overlapping sources, independently obtained maps
may disagree on the overlap, whereas simultaneous OT
requires the same transformation for every source.

\textbf{Unbalanced formulation.}
We consider an unbalanced relaxation of
\eqref{eq:sot_balanced} that penalizes deviations
from the prescribed marginals while retaining
a shared conditional distribution.
Specifically, we consider finite nonnegative
transport plans $\gamma_1,\dots,\gamma_K$ of the form
\begin{equation}
\label{eq:suot_shared_conditional}
d\gamma_k(x,y)
=
d(\gamma_k)_x(x)\,d\gamma(y\mid x),
\qquad k=1,\dots,K,
\end{equation}
where $\gamma(\cdot\mid x)$ is a shared conditional
probability distribution.
The source marginals $(\gamma_k)_x$ are optimized
jointly with this conditional distribution,
and the plans $\gamma_k$ need not have unit mass.

The resulting simultaneous UOT problem is
\begin{equation}
\label{eq:sot_unbalanced}
\begin{aligned}
&\inf_{\gamma(\cdot\mid x),\,\{(\gamma_k)_x\}_{k=1}^K}
\frac{1}{K}\sum_{k=1}^K
\Bigg[
\int_{\mathcal X\times\mathcal Y}
c(x,y)\,d\gamma_k(x,y)
+
D_\psi\!\left(
(\gamma_k)_x\middle\|\mathbb P_k
\right)
+
D_\phi\!\left(
(\gamma_k)_y\middle\|\mathbb P^\ast
\right)
\Bigg].
\end{aligned}
\end{equation}
Here, the infimum is taken over shared conditional
probability distributions and finite nonnegative
source marginals, with the plans $\gamma_k$
defined by~\eqref{eq:suot_shared_conditional}.
The divergences penalize deviations of the source
and target marginals from $\mathbb P_k$ and
$\mathbb P^\ast$, respectively.

A minimizing family $\{\gamma_k^\ast\}_{k=1}^K$,
when it exists, is called a simultaneous UOT plan.
Its shared conditional distribution
$\gamma^\ast(\cdot\mid x)$ defines the corresponding
stochastic transport map. In the case of deterministic plans,
$\gamma(\cdot\mid x)=\delta_{T(x)}$ for a shared map
$T:\mathcal X\to\mathcal Y$, and
$(\gamma_k)_y=T_{\#}(\gamma_k)_x$ for all $k$.

To the best of our knowledge, this divergence-based
formulation of simultaneous unbalanced OT has not
been studied previously.
The unbalanced formulation of~\citet{wang2022simultaneous}
uses marginal domination constraints.

\subsection{Computational Setup}
\label{sec:computational_setup}

The distributions $\mathbb P_1,\dots,\mathbb P_K$
and $\mathbb P^\ast$ are unknown and accessible only
through finite unpaired samples.
Our goal is to learn a shared transport map for
the underlying distributions that admits new inputs
not present in the training data.
This setup is typically called
continuous~\citep{fan2023neural,rout2022generative,korotin2023kernel}
and differs from the discrete one, which seeks
transport couplings between empirical
measures~\citep{cuturi2013sinkhorn,chizat2018scaling}.

\section{Simultaneous Neural OT Method}
\label{sec:method}

In this section, we derive our novel optimization objective
for learning simultaneous UOT plans (\wasyparagraph \ref{sec:objective}) and propose an algorithm to solve (\wasyparagraph \ref{sec:training}).
We then study the approximation properties of shared
neural maps (\wasyparagraph \ref{sec:theoretical_properties}). The \textit{\underline{proofs of all theoretical results}} are given in Appendix \ref{app:theory}.

\subsection{Deriving the Optimization Objective}
\label{sec:objective}

Direct optimization of~\eqref{eq:sot_unbalanced} is difficult from samples because the marginal divergences involve unknown measures. We therefore derive an equivalent semi-dual objective whose dependence on the data distributions is only through expectations.

For the result below, assume that $\mathbb P_1,\dots,\mathbb P_K$ are atomless, $c:\mathcal X\times\mathcal Y\to[0,\infty)$ is finite and continuous, and that $\psi$ and $\phi$ satisfy the assumptions stated in Section~\ref{sec:background_ot} and $\phi(0)<+\infty$, $\psi(0)<+\infty$. Put
$
\overline{\mathbb P}=\frac1K\sum_{k=1}^K\mathbb P_k.
$
This is equivalent to atomlessness of $\overline{\mathbb P}$: an average of atomless measures is atomless, while each $\mathbb P_k$ is absolutely continuous with respect to $\overline{\mathbb P}$. The atomlessness assumption is used only in the purification argument of Lemma~\ref{lem:no-gap}. Let $J^\ast$ denote the infimum in~\eqref{eq:sot_unbalanced}.

\begin{theorem}[Semi-dual formulation of simultaneous UOT]
\label{thm:duality}
Under the assumptions above,
\begin{equation}
\label{eq:exact-duality}
J^\ast
=
\sup_{\mathbf v\in C_b(\mathcal Y)^K}
\inf_{\gamma}
\mathcal J(\mathbf v,\gamma)
=
\sup_{\mathbf v\in C_b(\mathcal Y)^K}
\inf_{T:\mathcal X\to\mathcal Y}
\mathcal J(\mathbf v,T),
\end{equation}
where the first infimum is over all probability kernels $\gamma(\cdot\mid x)$ from $\mathcal X$ to $\mathcal Y$ and
\begin{align}
\label{eq:dual-functional}
\mathcal J(\mathbf v,\gamma)
=
-\frac{1}{K}\sum_{k=1}^K
\Bigg[
&
\mathbb E_{x\sim\mathbb P_k}
\mathbb E_{y\sim\gamma(\cdot\mid x)}
\bar\psi\!\left(v_k(y)-c(x,y)\right)
+
\mathbb E_{y\sim\mathbb P^\ast}
\bar\phi\!\left(-v_k(y)\right)
\Bigg].
\end{align}
For a deterministic map $T$, we use the shorthand
$\mathcal J(\mathbf v,T):=\mathcal J(\mathbf v,\delta_{T(\cdot)})$.
\end{theorem}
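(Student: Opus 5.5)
Throughout, $\bar\psi(s)=\sup_{u\ge0}\{su-\psi(u)\}$ and similarly $\bar\phi$; these are convex, nondecreasing and finite on $\mathbb R$ thanks to superlinearity, and $\bar\psi(s)\ge-\psi(0)>-\infty$. The proof has three steps. First, for a fixed kernel, dualize the marginal penalties (giving weak duality). Second, exchange $\sup$ and $\inf$ over kernels by a minimax theorem (strong duality). Third, reduce kernels to maps using atomlessness; this is the step I expect Lemma~\ref{lem:no-gap} to cover.

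\textbf{Step 1: fixed kernel.} Fix the shared kernel $\gamma(\cdot\mid x)$ and write the source marginal as $(\gamma_k)_x=h_k\mathbb P_k$ with $h_k\ge0$. Then $(\gamma_k)_y=\int h_k(x)\gamma(\cdot\mid x)\,d\mathbb P_k(x)$. By the Fenchel representation of $\psi$-divergences,
\[
D_\phi(\nu\|\mathbb P^\ast)=\sup_{v\in C_b(\mathcal Y)}\Big[\int v\,d\nu-\mathbb E_{\mathbb P^\ast}\bar\phi(v)\Big],
\]
valid for finite measures $\nu$ on a Polish space with $\phi$ superlinear, so there is no singular part. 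Substitute $-v_k$ for source $k$ and interchange $\inf_{h_k}$ with $\sup_{v_k}$. Pointwise minimization in $h_k(x)$ of
\[
h_k(x)\Big[\int\big(c(x,y)-v_k(y)\big)\,\gamma(dy\mid x)\Big]+\psi(h_k(x))
\]
gives $-\bar\psi\big(\int(v_k-c)\,d\gamma(\cdot\mid x)\big)$. By Jensen, since $\bar\psi$ is convex, this is $\ge-\mathbb E_{y\sim\gamma(\cdot\mid x)}\bar\psi(v_k(y)-c(x,y))$. Equality holds for Dirac kernels, which is one reason the map version is natural.

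Rather than fuss with this inequality, I would prove the first equality in \eqref{eq:exact-duality} directly. Observe that, for a general kernel, the objective $\inf_\gamma\mathcal J(\mathbf v,\gamma)$ is the dual of a relaxed primal in which the source reweighting $h_k(x,y)$ may depend on $y$. Minimizing $\mathcal J$ over kernels pointwise in $x$ concentrates $\gamma(\cdot\mid x)$ on near-minimizers of $\frac1K\sum_k p_k(x)\bar\psi(v_k(y)-c(x,y))$, where $p_k=d\mathbb P_k/d\overline{\mathbb P}$. Hence the infimum over kernels equals the infimum over measurable maps, by a measurable-selection argument with $\varepsilon$-minimizers. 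This gives the \emph{second} equality in \eqref{eq:exact-duality} immediately and without atomlessness.

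\textbf{Step 2: weak and strong duality.} Weak duality $J^\ast\ge\sup_{\mathbf v}\inf_\gamma\mathcal J$ follows from the Fenchel–Young inequalities
\[
\psi(h)\ge h s-\bar\psi(s),\qquad \phi(u)\ge -uv-\bar\phi(-v),
\]
applied inside \eqref{eq:sot_unbalanced}. For the reverse inequality, consider the relaxed primal over all families $\{\gamma_k\}$ of measures with $\gamma_k\ll\mathbb P_k\otimes\text{anything}$, coupled only through $\frac{d(\gamma_k)}{d(\mathbb P_k\otimes\cdot)}$. Its value $\tilde J\le J^\ast$ equals $\sup_{\mathbf v}\inf_\gamma\mathcal J$ by a standard Fenchel–Rockafellar or Sion argument: $\mathcal J$ is concave in $\mathbf v$ and linear in $\gamma$. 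Compactness on the primal side comes from tightness of sublevel sets, since superlinearity of $\psi$ and $\phi$ bounds masses and $\mathbb P^\ast$ is tight.

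\textbf{Step 3: closing the gap (main obstacle).} It remains to show $J^\ast\le\tilde J$, i.e.\ that $y$-dependent reweightings can be emulated by a shared kernel with $x$-only weights. This is the purification argument of Lemma~\ref{lem:no-gap}. Since $\overline{\mathbb P}$ is atomless, any relaxed plan can be approximated by a deterministic map $T$ together with weights $h_k(x)$. The idea is to split each small cell of $\mathcal X$ into subsets of prescribed $\overline{\mathbb P}$-masses via Lyapunov's convexity theorem applied to the vector measure $(\mathbb P_1,\dots,\mathbb P_K)$. Each subset is sent to a single $y$, so that every source sees approximately the required output distribution and cost. Continuity of $c$ and lower semicontinuity of the divergences then pass to the limit. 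I expect this Lyapunov-based chattering construction, with control of the $\phi$-divergence under weak approximation, to be the technical heart; the remaining steps are standard convex duality.
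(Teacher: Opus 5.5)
Your architecture matches the paper's: the relaxed problem in which the source reweighting may depend on $(x,y)$ (the paper's $R$, with penalty $D_\psi(\pi_k\|m_k^\gamma)$), its dual $\sup_{\mathbf v}\inf_\gamma\mathcal J$, purification via atomlessness of $\overline{\mathbb P}$ to close the gap, and an $\varepsilon$-selection to pass from kernels to maps. One slip in that last step: $\mathcal J$ carries a minus sign, so the kernel should concentrate on near-\emph{maximizers} of $\frac1K\sum_k p_k(x)\bar\psi(v_k(y)-c(x,y))$, not near-minimizers.

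The genuine gap is in Step 3, which is exactly where $J^\ast\le\tilde J$ must be proved. You send each Lyapunov-split subset of an $x$-cell to a single point $y$. Then every $(\gamma_k)_y$ is a finite combination of Dirac masses. This is singular with respect to $\mathbb P^\ast$ whenever $\mathbb P^\ast$ does not charge those points (e.g.\ any atomless target), so $D_\phi((\gamma_k)_y\|\mathbb P^\ast)=+\infty$ and every approximant has infinite primal value. Lower semicontinuity cannot repair this, because it points the wrong way. It gives $\liminf$ of the approximants' divergences $\ge$ the limit's divergence, whereas bounding $J^\ast$ from above needs a $\limsup\le$ bound. Likewise, $c$ is unbounded and merely continuous, so weak convergence alone does not control the transport cost.

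The missing idea is to keep the approximants stochastic in $y$ and prove upper bounds directly. Nothing forces a map here, since the primal $J^\ast$ is over kernels. The paper proceeds as follows.
\begin{enumerate}
\item It truncates to finite-valued dyadic densities $s_k=d\pi_k/dm_k^\gamma$ supported on a compact rectangle $C\times D$. The divergence terms converge by monotone/dominated convergence using $h(a)\le h(0)+h(b)$, which is where $\psi(0),\phi(0)<\infty$ are needed.
\item It partitions $D$ into cells $B_\ell$ on which $c(x,\cdot)$ oscillates by at most $\delta$, uniformly in $x\in C$.
\item It purifies the finite-action rule $x\mapsto(\ell,s_1,\dots,s_K)$ into a deterministic rule that preserves the integrals of $r_k\psi(z_k)$, $r_kz_k\mathbf 1_{\{\ell=j\}}$ and $r_kz_k\bar c_\ell$. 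Your Lyapunov splitting is adequate for this part.
\item It sends $x$ to $\nu_{\ell(x)}=\mathbb P^\ast|_{B_\ell}/\mathbb P^\ast(B_\ell)$, not to a point.
\end{enumerate}
The target marginals are then $\sum_\ell b_{k\ell}\nu_\ell\ll\mathbb P^\ast$. Jensen on each cell gives $\mathbb P^\ast(B_\ell)\,\phi\big(b_{k\ell}/\mathbb P^\ast(B_\ell)\big)\le\int_{B_\ell}\phi(\rho_k)\,d\mathbb P^\ast$, which is the needed upper bound. The cost error is at most $\delta\,\pi_k(\mathcal X\times\mathcal Y)$.

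There are also two weaker points in Step 2. First, Sion needs compactness of the kernel set $\{q:q_x=\overline{\mathbb P}\}$, and this fails for noncompact $\mathcal Y$. Superlinearity makes $\pi_k$ and $\beta_k$ tight, but not $q$. Second, the fixed-kernel identity ``relaxed value $=\sup_{\mathbf v}\mathcal J(\mathbf v,\gamma)$'' that your argument presupposes is itself a duality statement needing proof. The paper handles both points differently: it applies Fenchel--Moreau to a perturbation function on a one-point compactification of $\mathcal Y$, and gets the reverse bound from Fenchel--Young inequalities.
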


In~\eqref{eq:dual-functional}, the
distributions enter only through expectations and can be estimated from samples.

\textbf{Remark on attainment.}
Theorem~\ref{thm:duality} concerns optimal values and does not require
attainment of the primal infimum. Existence is assumed explicitly whenever
it is needed. The deterministic representation in~\eqref{eq:exact-duality}
refers only to the inner semi-dual problem; the structure of primal
minimizers is addressed separately in Theorem~\ref{thm:monge}.

\subsection{Parametrization and Training}
\label{sec:training}

\textbf{Parametrization.}
To optimize over the shared conditional distribution
$\gamma(\cdot\mid x)$ in~\eqref{eq:exact-duality},
we represent it using a stochastic or deterministic
map, following~\citet{choi2024generative}.
Let $\mathcal Z\subseteq\mathbb R^{D_z}$ be a Borel set
and $\mathbb S\in\mathcal P(\mathcal Z)$
be an atomless noise distribution.
Every shared conditional distribution
$\gamma(\cdot\mid x)$ in~\eqref{eq:exact-duality}
admits a measurable realization
$T:\mathcal X\times\mathcal Z\to\mathcal Y$
with $z\sim\mathbb S$:
$
\gamma(\cdot\mid x)[=\gamma_T(\cdot\mid x)]=\left(T(x,\cdot)\right)_{\#}\mathbb S,$ where the map $T$ and the noise distribution $\mathbb S$
are shared across all sources.

Using this representation, we rewrite
the optimization problem~\eqref{eq:exact-duality} as
\begin{equation}
\label{eq:duality-map}
 J^\ast
=
\sup_{v_1,\dots,v_K\in C(\mathcal Y)}
\inf_T
\mathcal J(v_1,\dots,v_K,T),
\end{equation}
where the infimum is over measurable maps
$T:\mathcal X\times\mathcal Z\to\mathcal Y$ and
\vspace{-1mm}
\begin{equation}
\label{eq:map-functional}
\mathcal \!\!\!J(v_1,\dots,v_K,T)
\!=\!-\frac{1}{K}\sum_{k=1}^K
\Big[
\mathbb E_{\substack{x\sim\mathbb P_k\\z\sim\mathbb S}}
\bar\psi\!\left(v_k(T(x,z))\!-\! c(x,T(x,z))\right)
\!+\!
\mathbb E_{y\sim\mathbb P^\ast}
\bar\phi\!\left(-v_k(y)\right)
\Big].
\end{equation}

\vspace{-3mm}
In some settings, stochasticity of the conditional
distribution is not needed.
We can then consider a shared measurable map
$T:\mathcal X\to\mathcal Y$, which defines
deterministic conditional distributions
$\gamma_T(\cdot\mid x)=\delta_{T(x)}$.
In this case, the expectation over $\mathbb S$
in~\eqref{eq:map-functional} is omitted.
Theorem~\ref{thm:monge} below motivates this choice
when the transport cost is quadratic and both
marginal penalties are KL divergences.

To solve~\eqref{eq:duality-map}, we parametrize
the shared map and the potentials by neural nets
$T_\theta:\!\mathcal X\!\times\!\mathcal Z\to\!\mathcal\! Y$
and $v_{\omega_k}:\mathcal Y\to\mathbb R$,
$k=1,\dots,K$, with trainable parameters
$\theta$ and $\omega_1,\dots,\omega_K$, respectively.
The noise input is omitted for deterministic maps.
In practice, we scale the transport cost $c$ by an unbalancedness parameter
$\tau>0$: smaller values favor balanced simultaneous OT,
while larger values allow greater marginal deviations.

\textbf{Training.}
In our setup, the source distributions
$\mathbb P_1,\dots,\mathbb P_K$ and the target
distribution $\mathbb P^\ast$ are accessible only
through samples.
We therefore estimate the objective
$\mathcal J(v_{\omega_1},\dots,v_{\omega_K},T_\theta)$
in~\eqref{eq:map-functional} using Monte Carlo
with random mini-batches from the source, target
and noise distributions.

Training alternates between gradient ascent
with respect to the potential parameters
$\omega_1,\dots,\omega_K$ and gradient descent
with respect to the map parameters $\theta$.
Each potential is trained using its corresponding
source distribution and the common target,
while the shared map receives contributions
from all sources. The training procedure is summarized
in Algorithm~\ref{alg:simnot}.

\textbf{Inference.}
Given a new input $x$, we generate an output
$T_\theta(x,z)$ with $z\sim\mathbb S$,
or $T_\theta(x)$ in the deterministic case.
The source index and the potentials are not
required at inference time.

\begin{algorithm}[t!]
    \SetAlgorithmName{Algorithm}{empty}{Empty}
    \textbf{Input:}
    Distributions $\mathbb P_1,\dots,\mathbb P_K$,
    $\mathbb P^\ast$ and $\mathbb S$ accessible by samples;
    transport cost $c:\mathcal X\times\mathcal Y\to\mathbb R_+$;
    parameter $\tau>0$; conjugate functions
    $\bar\psi,\bar\phi$;
    shared map $T_\theta$ and potentials
    $v_{\omega_k}$, $k=1,\dots,K$;
    number $N_T$ of inner iterations; batch sizes.\\
    \textbf{Output:}
    Trained shared (stochastic) map $T_\theta$.

    \Repeat{converged}{
        Sample batches $X_k\sim\mathbb P_k$,
        $k=1,\dots,K$, and $Y\sim\mathbb P^\ast$\;
        For each $x\in X_k$, independently sample
        $z_{k,x}\sim\mathbb S$\;

        $\widehat{\mathcal L_v}
        \leftarrow
        -\frac{1}{K}\sum_{k=1}^K
        \Big[
        \frac{1}{|X_k|}\sum_{x\in X_k}
        \bar\psi\!\left(
        v_{\omega_k}(T_\theta(x,z_{k,x}))
        -\tau c(x,T_\theta(x,z_{k,x}))
        \right)
        +
        \frac{1}{|Y|}\sum_{y\in Y}
        \bar\phi\!\left(-v_{\omega_k}(y)\right)
        \Big]$\;

        Update $\omega_1,\dots,\omega_K$
        using $\nabla_{\omega_k}\widehat{\mathcal L_v}$
        to \textit{maximize} $\widehat{\mathcal L_v}$,
        keeping $\theta$ fixed\;

        \For{$n_T=1,2,\dots,N_T$}{
            Sample batches $X_k\sim\mathbb P_k$,
            $k=1,\dots,K$\;
            For each $x\in X_k$, independently sample
            $z_{k,x}\sim\mathbb S$\;

            $\widehat{\mathcal L_T}
            \leftarrow
            -\frac{1}{K}\sum_{k=1}^K
            \frac{1}{|X_k|}\sum_{x\in X_k}
            \bar\psi\!\left(
            v_{\omega_k}(T_\theta(x,z_{k,x}))
            -\tau c(x,T_\theta(x,z_{k,x}))
            \right)$\;

            Update $\theta$
            using $\nabla_\theta\widehat{\mathcal L_T}$
            to \textit{minimize} $\widehat{\mathcal L_T}$,
            keeping $\omega_1,\dots,\omega_K$ fixed\;
        }
    }
    \caption{Simultaneous Neural Optimal Transport (SimNOT)}
    \label{alg:simnot}
\end{algorithm}

\subsection{Theoretical Properties}
\label{sec:theoretical_properties}
The semi-dual formulation allows an arbitrary shared stochastic kernel. We first ask whether one neural map with a noise input is expressive enough to approximate such a kernel simultaneously for all source distributions.

For the approximation results in this subsection only, assume that
$
\mathcal Y=[a,b]^{D_y}
$
for some $a<b$, and let $\mathbb S=\operatorname{Unif}[0,1]$. The compact box assumption makes the $W_2$ statement automatic and allows coordinatewise clipping to $\mathcal Y$ to be implemented by ReLU operations.

\begin{theorem}[Approximation by a shared neural map]
\label{thm:neural-approximation}
Let $\gamma(\cdot\mid x)$ be a Borel probability kernel from $\mathcal X$ to $\mathcal Y$. For every $\varepsilon>0$, there exists a finite fully connected ReLU network
$$
T_\theta:\mathbb R^{D_x+1}\to\mathcal Y
$$
such that, with
$$
\gamma_\theta(\cdot\mid x)
=
\left(T_\theta(x,\cdot)\right)_\#\mathbb S,
$$
we have simultaneously for every $k=1,\dots,K$,
\begin{equation}
\label{eq:kernel-W2}
\int_{\mathcal X}
W_2^2\!\left(
\gamma_\theta(\cdot\mid x),
\gamma(\cdot\mid x)
\right)
\,d\mathbb P_k(x)
<\varepsilon
\end{equation}
and
\begin{equation}
\label{eq:output-W2}
W_2^2\!\left(
(T_\theta)_\#(\mathbb P_k\otimes\mathbb S),
\gamma_\#\mathbb P_k
\right)
<\varepsilon.
\end{equation}
\end{theorem}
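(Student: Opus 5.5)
We reduce both bounds to a single integrated estimate against the mixture $\overline{\mathbb P}=\frac1K\sum_k\mathbb P_k$. Since $\mathbb P_k\le K\,\overline{\mathbb P}$, it suffices to construct $T_\theta$ with
\[
\int_{\mathcal X} W_2^2\bigl(\gamma_\theta(\cdot\mid x),\gamma(\cdot\mid x)\bigr)\,d\overline{\mathbb P}(x)<\varepsilon/K .
\]
Then \eqref{eq:kernel-W2} holds for every $k$ at once. For \eqref{eq:output-W2}, fix measurably in $x$ an optimal coupling of $\gamma_\theta(\cdot\mid x)$ and $\gamma(\cdot\mid x)$ (measurable selection of optimal plans on the compact $\mathcal Y$). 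Gluing these couplings along $\mathbb P_k$ gives a coupling of $(T_\theta)_\#(\mathbb P_k\otimes\mathbb S)=\gamma_{\theta\#}\mathbb P_k$ and $\gamma_\#\mathbb P_k$. Hence the left side of \eqref{eq:output-W2} is at most that of \eqref{eq:kernel-W2}, by joint convexity of $W_2^2$. So everything reduces to approximating the kernel in $L^1(\overline{\mathbb P})$ with respect to $W_2^2$. Since $\mathcal Y$ is compact, $W_2^2\le \mathrm{diam}(\mathcal Y)^2$, so errors on sets of small $\overline{\mathbb P}$-measure are harmless.

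\textbf{Step 1: a measurable realization.} We write $\gamma(\cdot\mid x)=(G(x,\cdot))_\#\mathrm{Unif}[0,1]$ for a jointly Borel map $G:\mathcal X\times[0,1]\to\mathcal Y$. This is the standard randomization lemma (Kallenberg), e.g.\ via iterated conditional quantile functions composed with a Borel isomorphism of $[0,1]$ onto $[0,1]^{D_y}$. Then
\[
\int W_2^2\bigl(\gamma_\theta(\cdot\mid x),\gamma(\cdot\mid x)\bigr)\,d\overline{\mathbb P}(x)\le \int\!\!\int_0^1 \|T_\theta(x,z)-G(x,z)\|^2\,dz\,d\overline{\mathbb P}(x),
\]
using the synchronous coupling $z\mapsto (T_\theta(x,z),G(x,z))$. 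The problem thus becomes an $L^2(\overline{\mathbb P}\otimes\mathrm{Unif}[0,1])$ approximation of a bounded Borel function $G$ with values in $[a,b]^{D_y}$.

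\textbf{Step 2: continuous, then ReLU.} The measure $\mu=\overline{\mathbb P}\otimes\mathrm{Unif}[0,1]$ is a finite Borel measure on $\mathbb R^{D_x+1}$, hence Radon. By Lusin's theorem, or density of $C_c$ in $L^2(\mu)$, choose continuous compactly supported $H$ with $\|H-G\|_{L^2(\mu)}^2<\varepsilon/(4K)$. Replacing $H$ by its coordinatewise clip to $[a,b]$ only decreases this error, because $G$ takes values in the box. Next choose a compact set $C$ with $\mu(C^c)$ small and apply the universal approximation theorem for ReLU networks on $C$ to get $N$ with $\sup_C\|N-H\|$ small. Finally set $T_\theta=\mathrm{clip}_{[a,b]}\circ N$; each coordinate clip is $a+\mathrm{ReLU}(t-a)-\mathrm{ReLU}(t-b)$, so $T_\theta$ is still a finite ReLU network. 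Clipping is $1$-Lipschitz and fixes $H$, so the error on $C$ is controlled. Off $C$, both $T_\theta$ and $G$ lie in $\mathcal Y$, so that error is at most $\mathrm{diam}(\mathcal Y)^2\mu(C^c)$. The total is below $\varepsilon/K$.

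\textbf{Main obstacle.} The delicate step is Step 1: obtaining a \emph{jointly} measurable $G$ with a one-dimensional uniform noise when $D_y>1$, together with the measurability in $x$ needed for the gluing argument. I would handle it by citing the kernel randomization lemma directly, rather than building quantile maps coordinatewise. Note that atomlessness is not needed here; it is only used for the mixture domination argument.
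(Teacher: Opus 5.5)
Your proposal is correct and follows essentially the same route as the paper. Both arguments use the randomization lemma, reduce to the mixture via $\mathbb P_k\le K\overline{\mathbb P}$, approximate the realizing map in $L^2(\overline{\mathbb P}\otimes\mathbb S)$ by a clipped ReLU network (continuous approximation plus universal approximation on a compact set), and use the synchronous coupling $z\mapsto(T_\theta(x,z),G(x,z))$ for \eqref{eq:kernel-W2}.

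The only deviation is in \eqref{eq:output-W2}. You glue measurably selected optimal couplings, which gives the sharper bound by the integrated kernel error. The paper instead reuses the synchronous coupling with $X\sim\mathbb P_k$ added, which avoids the measurable-selection step.
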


The theorem shows that any shared stochastic transport
mechanism can be approximated by a single neural map, simultaneously and with a common accuracy bound for all source distributions.

\begin{corollary}[Consistency of the neural parametrization]
\label{cor:value-consistency}
Assume the conditions of Theorem~\ref{thm:duality} and $\mathcal Y=[a,b]^{D_y}$. Let $\mathcal T_n$ be increasing classes of ReLU maps $\mathcal X\times[0,1]\to\mathcal Y$ whose union contains every finite ReLU network with coordinatewise clipped output. Let $\mathcal V_m$ be increasing compact subsets of $C(\mathcal Y)^K$ whose union is uniformly dense in $C(\mathcal Y)^K$. Define
\begin{equation}
\label{eq:restricted_value}
D_{m,n}
=
\sup_{\mathbf v\in\mathcal V_m}
\inf_{T\in\mathcal T_n}
\mathcal J(\mathbf v,T).
\end{equation}
Then
\begin{equation}
\label{eq:value-consistency}
\lim_{m\to\infty}\lim_{n\to\infty}D_{m,n}
=
J^\ast.
\end{equation}
\end{corollary}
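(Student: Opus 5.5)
We prove the corollary by establishing matching upper and lower bounds on the iterated limit $\lim_{m}\lim_{n}D_{m,n}$. The key intermediate quantity is
$$
D_m:=\sup_{\mathbf v\in\mathcal V_m}\inf_{\gamma}\mathcal J(\mathbf v,\gamma),
$$
where the infimum runs over all probability kernels. We aim to show three things: (i) for fixed $m$, $\lim_{n}D_{m,n}=D_m$; (ii) $D_m\uparrow J^\ast$; (iii) the limits therefore combine to give \eqref{eq:value-consistency}.

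\textbf{Step (ii).} Since $\mathcal V_m$ increases, $D_m$ is nondecreasing, and $D_m\le J^\ast$ by Theorem~\ref{thm:duality}. For the reverse inequality, fix $\delta>0$ and choose $\mathbf v\in C_b(\mathcal Y)^K=C(\mathcal Y)^K$ (the equality holds because $\mathcal Y$ is compact) such that $\inf_\gamma\mathcal J(\mathbf v,\gamma)>J^\ast-\delta$. By density, pick $\mathbf v^m\in\mathcal V_m$ with $\|\mathbf v^m-\mathbf v\|_\infty\to0$. The conjugates $\bar\psi,\bar\phi$ are convex, finite on the relevant range (we check this from $\psi(0),\phi(0)<\infty$, which makes the conjugates finite and nondecreasing), and hence locally Lipschitz. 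Because $v_k-c(x,y)$ is bounded above by $\sup v_k$ and the conjugates are monotone, perturbing $\mathbf v$ uniformly changes each integrand by a uniformly small amount, uniformly in $\gamma$. Hence $\inf_\gamma\mathcal J(\mathbf v^m,\gamma)\to\inf_\gamma\mathcal J(\mathbf v,\gamma)$, and so $\liminf_m D_m\ge J^\ast-\delta$.

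\textbf{Step (i).} Since $\mathcal T_n$ is a class of stochastic maps, each of which induces a kernel, we get $\inf_{T\in\mathcal T_n}\mathcal J\ge\inf_\gamma\mathcal J$; thus $D_{m,n}\ge D_m$, and $D_{m,n}$ is nonincreasing in $n$. For the upper bound we need
$$
\inf_{T\in\bigcup_n\mathcal T_n}\mathcal J(\mathbf v,T)=\inf_\gamma\mathcal J(\mathbf v,\gamma)
$$
uniformly over $\mathbf v$ in the compact set $\mathcal V_m$. Fix $\mathbf v$ and a near-optimal kernel $\gamma$. Theorem~\ref{thm:neural-approximation} provides ReLU networks $T_\theta$ whose kernels are close in $\int W_2^2\,d\mathbb P_k$ for every $k$. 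We then pass this closeness to $\mathcal J$. The map $(x,y)\mapsto\bar\psi(v_k(y)-c(x,y))$ is continuous and bounded on $\mathcal X\times\mathcal Y$ only when $c$ is bounded. Since $\bar\psi$ is nondecreasing and $c\ge0$, the integrand is bounded above, and we use a tightness argument restricting to a compact $K_\delta\subset\mathcal X$ of large $\mathbb P_k$-mass. On $K_\delta\times\mathcal Y$ the integrand is uniformly continuous. Small $\int W_2^2$ then implies, via Markov's inequality, that $W_2$ is small on most of $K_\delta$, which yields closeness of the expectations. The remaining contribution is bounded because $\bar\psi$ is bounded below on $(-\infty,\sup v]$; monotone convex conjugates of this type are bounded below by $-\psi(0)$.

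Uniformity over $\mathcal V_m$ follows from compactness. The functions $\mathbf v\mapsto\inf_{T\in\mathcal T_n}\mathcal J(\mathbf v,T)$ are equi-Lipschitz in $\|\cdot\|_\infty$ on bounded sets by the same conjugate Lipschitz bound, and they decrease pointwise to the continuous limit $\inf_\gamma\mathcal J(\mathbf v,\cdot)$. Dini's theorem then upgrades pointwise convergence to uniform convergence on $\mathcal V_m$, so $\sup_{\mathcal V_m}$ commutes with $\lim_n$.

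\textbf{Main obstacle.} The hardest step is the passage from the $W_2$ kernel approximation to convergence of $\mathcal J$ when $\mathcal X$ is noncompact and $c$ is possibly unbounded. We handle it with the tightness and truncation argument above, relying on the monotonicity of $\bar\psi$ to control the tails from one side. That one-sided control suffices, because we only need the upper bound $\limsup_n D_{m,n}\le D_m$ at a near-optimal $\gamma$, and that $\gamma$ may be chosen with finite cost.
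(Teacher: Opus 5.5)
Your proposal is correct and follows the same route as the paper. You get pointwise convergence $d_n(\mathbf v)\downarrow d(\mathbf v)$ from the neural approximation. You upgrade it to uniform convergence on the compact $\mathcal V_m$ via Dini and the equi-Lipschitz bound ($\bar\psi$ Lipschitz on $(-\infty,B]$, $\bar\phi$ on bounded intervals). You then conclude by density, continuity of $d$, and Theorem~\ref{thm:duality} with $C(\mathcal Y)=C_b(\mathcal Y)$.

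The only difference is technical. You pass the approximation to $\mathcal J$ through the $W_2$ bound of Theorem~\ref{thm:neural-approximation}, tightness, and Markov's inequality. The paper instead uses the $L^2(\overline{\mathbb P}\otimes\mathbb S)$ approximation of the realization map, convergence in probability, and bounded convergence.

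Two small points. First, the integrand always lies in $[-\psi(0),\bar\psi(\|v_k\|_\infty)]$, so your remark that it is bounded only when $c$ is bounded is wrong, and no one-sided tail argument is needed. Second, finiteness of $\bar\psi$ comes from superlinearity of $\psi$, not from $\psi(0)<\infty$.
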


The corollary shows that the neural parametrization is also consistent
at the level of the optimization problem: as the map and potential
classes become sufficiently expressive, the restricted population value converges to the exact value $J^\ast$. This statement concerns optimal values; it does not require convergence of particular network parameters. The proof is given in Appendix~\ref{app:approximation}.

We finally ask a different question. The previous theorem shows that stochastic kernels can be represented by neural maps; it does not say whether randomization is actually needed by an optimal primal solution. For quadratic cost and KL marginal penalties, an attained optimum is deterministic under a standard regularity assumption on the source densities.

\begin{theorem}[Monge structure in the quadratic KL case]
\label{thm:monge}
Let $\mathcal X=\mathcal Y=\mathbb R^d$,
$$
c(x,y)=\|x-y\|^2,
\qquad
\psi(t)=\phi(t)=t\log t-t+1.
$$
Assume that there exists an open set
$\Omega\subseteq\mathbb R^d$ such that
$\mathbb P_k(\Omega)=1$ for all $k$, and the restriction of $\mathbb P_k$ to
$\Omega$ admits a strictly positive density $p_k \in C^2(\Omega)$
with respect to $d$-dimensional Lebesgue measure.
If the infimum in~\eqref{eq:sot_unbalanced}
is attained by $
\left(
\gamma^\ast,
\{(\gamma_k^\ast)_x\}_{k=1}^K
\right),
$
then there exists a Borel map
$T^\ast:\mathbb R^d\to\mathbb R^d$ such that
\begin{equation}
\label{eq:monge-conclusion}
\gamma^\ast(\cdot\mid x)
=
\delta_{T^\ast(x)}
\qquad
\sigma\text{-a.e.},
\qquad
\sigma=\sum_{k=1}^K(\gamma_k^\ast)_x.
\end{equation}
Consequently,
\[
\gamma_k^\ast
=
(\mathrm{id}_{\mathbb R^d},T^\ast)_{\#}
(\gamma_k^\ast)_x,
\qquad k=1,\dots,K.
\]
\end{theorem}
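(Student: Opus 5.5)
The plan is to reduce the simultaneous problem, at an attained optimum, to a standard (balanced) quadratic OT problem between two fixed measures, and then invoke Brenier's theorem.

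\textbf{Step 1: structure of the optimal source marginals.} Fix the optimal shared kernel $\gamma^\ast(\cdot\mid x)$ and vary only the source marginal $(\gamma_k)_x=\mu_k$. The objective for source $k$ is
\[
\int \Big(\int c(x,y)\,d\gamma^\ast(y\mid x)\Big)d\mu_k(x)+\mathrm{KL}(\mu_k\|\mathbb P_k)+\mathrm{KL}\big((\gamma^\ast)_\#\mu_k\|\mathbb P^\ast\big).
\]
I will perturb $\mu_k$ multiplicatively, $\mu_k^\varepsilon=(1+\varepsilon h)\mu_k$ with bounded $h$. The first-order optimality condition then gives $d\mu_k/d\mathbb P_k=\exp(-f_k(x))$ $\mu_k$-a.e., where
\[
f_k(x)=\int\big(c(x,y)+\log\tfrac{d(\gamma_k^\ast)_y}{d\mathbb P^\ast}(y)\big)\,d\gamma^\ast(y\mid x).
\]
The purpose of this step is to show that $\mu_k$ is equivalent to $\mathbb P_k$ on the relevant set, so that $\mu_k\ll$ Lebesgue and hence $\sigma\ll$ Lebesgue on $\Omega$.

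If the perturbation analysis is delicate (the log-density may be $-\infty$), I have a fallback. I only need $\sigma\ll\mathcal L^d$, and this follows directly from $\mathrm{KL}(\mu_k\|\mathbb P_k)<\infty$ (finiteness of $J^\ast$, using $\psi(0)<\infty$) and $\mathbb P_k\ll\mathcal L^d$. This fallback is easier and suffices, so I will use it.

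\textbf{Step 2: the pooled plan must be cost-optimal for its own marginals.} Let $\Gamma=\sum_k\gamma_k^\ast=\sigma\otimes\gamma^\ast(\cdot\mid x)$, with first marginal $\sigma$ and second marginal $\nu=\sum_k(\gamma_k^\ast)_y$. I claim $\Gamma$ is an optimal balanced plan between $\sigma$ and $\nu$ for the cost $\|x-y\|^2$.

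The key difficulty is that the divergence term depends on each $(\gamma_k)_y$ separately, not only on $\nu$. So replacing $\Gamma$ by another plan with the same marginals can change the individual $(\gamma_k)_y$ and break optimality. I therefore need a finer competitor: one that keeps every $(\gamma_k^\ast)_y$ fixed while lowering the total cost.

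I will argue via cyclical monotonicity of the support of $\Gamma$. Suppose $(x_1,y_1),(x_2,y_2)\in\operatorname{supp}\Gamma$ violate monotonicity. Swapping small masses between neighbourhoods changes which $x$ sends to which $y$, but a generic swap alters the $k$-wise output measures. To avoid this, I will swap within each source simultaneously. Take small balls $B_1\ni x_1$ and $B_2\ni x_2$. On $B_i$, each $\mu_k$ has density $\rho_k$ relative to $\sigma$; these are bounded by $1$ since $\sigma=\sum_k\mu_k$. Move mass $\varepsilon\,\rho_k\sigma|_{B_1}\otimes(\text{part of }\gamma^\ast(\cdot|x)\text{ near }y_1)$ to $y_2$, and the reverse from $B_2$.

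To keep each $(\gamma_k)_y$ unchanged, the two swaps must carry equal $k$-mass for every $k$. This requires the vector $(\rho_k)_k$ to be comparable near $x_1$ and $x_2$, which does not hold in general. \emph{This is the main obstacle.}

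To handle it, I will use the pointwise weights. The shared kernel is also a minimizer of $\int g\,d\gamma(\cdot|x)$ pointwise, where
\[
g(x,y)=\sum_k\rho_k(x)\big(\|x-y\|^2+\log\tfrac{d(\gamma_k^\ast)_y}{d\mathbb P^\ast}(y)\big)
\]
is obtained by linearizing the KL terms at the optimum. Linearizing the target KL in $\gamma$ gives a first-order condition: $\gamma^\ast(\cdot|x)$ is concentrated on $\arg\min_y\big[\|x-y\|^2+\sum_k w_k(x)u_k(y)\big]$, with $w_k=\rho_k$ and $u_k=\log\frac{d(\gamma_k^\ast)_y}{d\mathbb P^\ast}$. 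Equivalently, $\gamma^\ast(\cdot|x)$ is concentrated on $\arg\max_y\big[\langle 2x,y\rangle-\|y\|^2-\sum_k\rho_k(x)u_k(y)\big]$.

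\textbf{Step 3: deterministic argmin via a Lipschitz potential.} Taking the $C^2$ positive densities into account, I expect Step 1 to give $\rho_k(x)=p_k(x)e^{-f_k(x)}/\sum_j p_je^{-f_j}$. The functions $f_k$ are semiconcave in $x$, being infima and averages of $\|x-y\|^2$ plus terms in $y$. Hence the value function $\Phi(x)=\min_y[\cdots]$ is semiconcave, thus locally Lipschitz and differentiable $\mathcal L^d$-a.e.

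At a differentiability point where the $\rho_k$ are also differentiable, every minimizer $y$ satisfies an envelope identity of the form $2(x-y)+\sum_k\nabla\rho_k(x)u_k(y)=\nabla\Phi(x)$. The $C^2$ regularity of $p_k$ is what controls $\nabla\rho_k$. To conclude uniqueness of $y$, I will first try the reduction $\sum_k\nabla\rho_k u_k(y)=0$ on the support. If that fails, I will instead use that $\sum_k\rho_k=1$ and work with $\log$-ratios $\rho_k/\rho_1$. With uniqueness, $y=T^\ast(x)$ on a set of full Lebesgue measure, hence of full $\sigma$-measure by Step 1. Measurable selection gives Borel $T^\ast$, and the consequence $\gamma_k^\ast=(\mathrm{id},T^\ast)_\#(\gamma_k^\ast)_x$ is immediate from \eqref{eq:suot_shared_conditional}.
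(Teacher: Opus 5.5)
There is a genuine gap in Steps~2--3. Your Step~1 fallback ($\sigma\ll\mathcal L^d$ from finite KL) is correct and matches the paper. The Step~2 claim, that $\Gamma=\sum_k\gamma_k^\ast$ is optimal between $\sigma$ and $\nu$, is false in general. Take $d=1$ and two sources with disjoint supports on opposite sides of $\mathbb P^\ast$. The problem decouples into two pairwise UOT problems, each solved by a monotone map onto (a KL-perturbation of) all of $\mathbb P^\ast$, so the pooled plan crosses.

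You then switch to a linearized condition: $\gamma^\ast(\cdot\mid x)$ is concentrated on $\arg\min_y[\|x-y\|^2+\sum_k\rho_k(x)u_k(y)]$. This is only heuristic, since $u_k$ is defined $\mathbb P^\ast$-a.e.\ and may be $-\infty$ or unbounded, so a pointwise argmin depends on the version. More seriously, it puts the merely measurable optimal weights $\rho_k=d(\gamma_k^\ast)_x/d\sigma$ inside the function you later differentiate in $x$. Step~3 needs $\nabla\rho_k$, and nothing provides it. You dropped Step~1, and even its formula $\rho_k\propto p_ke^{-f_k}$ would not help. The reason is that $f_k(x)=\int(c+u_k)\,d\gamma^\ast(\cdot\mid x)$ is an average against an $x$-dependent kernel, not an infimum, so it is not semiconcave. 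The $C^2$ hypothesis concerns $p_k$, not $\rho_k$. Even granting differentiability, the identity $2(x-y)+\sum_k\nabla\rho_k(x)u_k(y)=\nabla\Phi(x)$ does not determine $y$, because $u_k$ is an arbitrary measurable function. ``Try the reduction, otherwise use log-ratios'' is not an argument.

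The missing idea is to move the source weights to the target side, so they never have to be differentiated. Put $a_k=p_k/\sum_jp_j$ and $\lambda_k=d(\gamma_k^\ast)_x/d\sigma$. The KL chain rule gives $\sum_k\operatorname{KL}((\gamma_k^\ast)_x\|\mathbb P_k)=\operatorname{KL}(\sigma\|\sum_k\mathbb P_k)+\int\sum_k\lambda_k\log(\lambda_k/a_k)\,d\sigma$. Lift the plan to $\zeta=\sigma(dx)\,\gamma^\ast(dy\mid x)\,\delta_{\lambda(x)}(dz)$ on $\mathbb R^d\times\mathbb R^d\times\Delta_K$, with cost $\widetilde c(x;y,z)=\|x-y\|^2+\sum_kz_k\log(z_k/a_k(x))$.

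Fixing the $(y,z)$-marginal fixes every $(\gamma_k)_y$, which is exactly what your simultaneous swaps could not guarantee. Take any competitor $\widetilde\zeta$ with the same marginals as $\zeta$ and set $\widetilde\pi_k(dx,dy)=\int_{\Delta_K}z_k\,\widetilde\zeta(dx,dy,dz)$. By conditional Jensen, these measures give an admissible point of the relaxed problem in \eqref{eq:relaxed-problem}. Its value is at most the $\widetilde c$-cost of $\widetilde\zeta$ plus terms that are unchanged. Lemma~\ref{lem:no-gap} then forces $\zeta$ to be $\widetilde c$-optimal.

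Localizing to compact pieces, Kantorovich duality gives a Lipschitz $\widetilde c$-concave potential $u$. At its differentiability points $\nabla u(x)=2(x-y)-\sum_kz_k\nabla\log a_k(x)$, and $z=\lambda(x)$ on the support, so $y$ is pinned down. Only $\nabla\log a_k$ enters, which is where the regularity of $p_k$ is used. Finally, $\sigma\ll\mathcal L^d$ supplies a.e.\ differentiability.
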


\section{Experiments}
\label{sec:experiments}

We evaluate SimNOT on a synthetic example and
an unpaired image restoration task.
In~\S\ref{sec:exp_toy}, we investigate the learned
shared transformation between several Gaussian source
distributions and a common Swiss roll target.
In~\S\ref{sec:exp_images}, we apply our method
to CelebA images with different degradations,
using a single model to restore them to a common
clean image distribution.

\subsection{Toy Experiment}
\label{sec:exp_toy}

We consider a two-dimensional example to illustrate
the ability of our method to learn a shared transformation
from several source distributions to a common target.
The sources $\mathbb P_1,\dots,\mathbb P_5$ are Gaussian
distributions with identity covariance matrices and
means $(0,0)$, $(3,0)$, $(0,3)$, $(-3,0)$, and $(0,-3)$,
respectively.
The target $\mathbb P^\ast$ is a noisy Swiss-roll
distribution.
We learn a single deterministic map
$T_\theta:\mathbb R^2\to\mathbb R^2$
with five source-specific potentials.
The map receives only the input coordinates,
without the source index.
Further experimental details are provided
in Appendix~\ref{app:swiss_roll}.

\textbf{Results.}
Figure~\ref{fig:swiss_roll} shows the transported samples
from each source alongside samples from the common target.
The shared map recovers the overall spiral structure
for all five sources.
The outputs differ in their spread around the spiral,
and some samples lie between its branches.
Thus, the experiment illustrates simultaneous
approximation of the target geometry, while leaving
visible discrepancies between the transported
and target distributions.

\begin{figure*}[t]
    \centering
    \includegraphics[height=0.19\textwidth]
{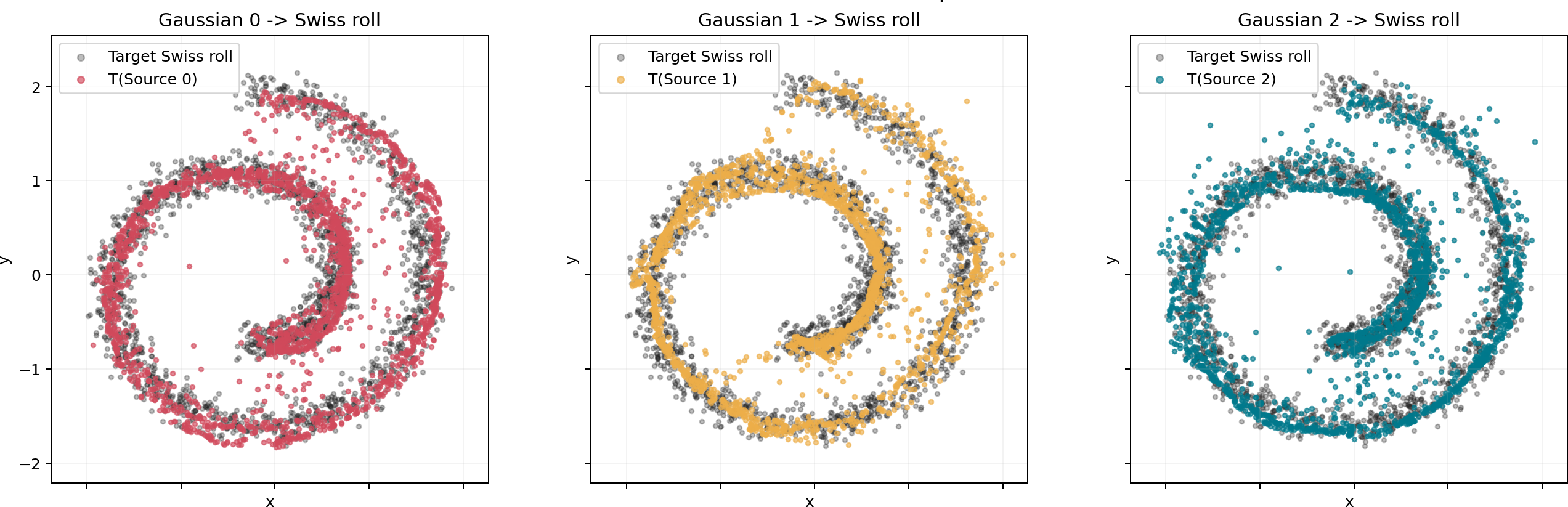}%
\hfill
\raisebox{-0.0115\textwidth}{%
    \includegraphics[height=0.203\textwidth]
    {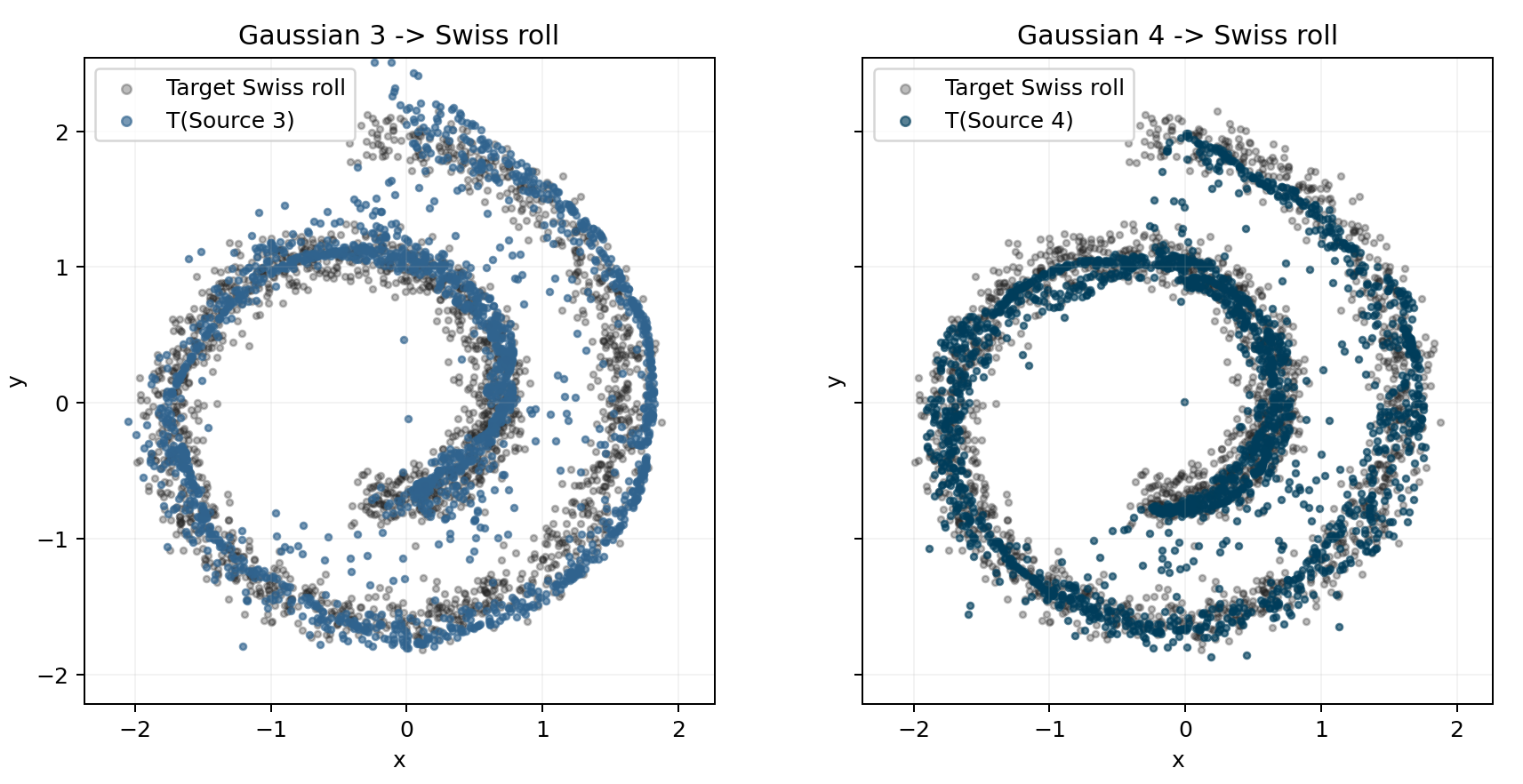}%
}
    \caption{
    Simultaneous transport from five Gaussian sources
    to a common Swiss roll target.
    Each panel shows the outputs of the same learned
    map for one source (colored points), overlaid
    with target samples (gray points).
    The visualization uses independent samples
    after 100K training iterations.
    }
    \label{fig:swiss_roll}
\end{figure*}

\subsection{Unpaired Image Restoration with Multiple Degradations}
\label{sec:exp_images}

\textbf{Experimental setup.}
We consider unpaired image restoration on CelebA
at resolution $64\times64$.
The five source distributions correspond to bicubic
and bilinear downsampling, JPEG compression,
Gaussian blur, and additive Gaussian noise.
The common target consists of clean images.
We use disjoint sets of images to construct the source
distributions, the clean target, and the test set.
Each source image is assigned to one degradation,
and its clean counterpart is not used as a training target.
Our goal is to learn a shared restoration map
that processes inputs without receiving their
degradation labels.
Data preparation and training details are provided
in Appendix~\ref{app:celeba_details}.

\textbf{Baselines.}
We compare SimNOT with two UOT baselines.
\textit{Pooled UOT} learns a shared map and a single
potential by treating the source distributions
as an equally weighted mixture.
\textit{Conditional UOT} conditions both the map
and the potential on the degradation label.
For blind restoration, this label is predicted
by a separately trained degradation classifier which uses the same five degradation
types and fixed parameters as restoration training.
All models use the same data split
and are evaluated using their final exponential
moving average (EMA) checkpoints after
$100$K training steps.

\textbf{Evaluation.}
We assess distribution matching using FID
and reconstruction accuracy using PSNR
against the corresponding clean images.
For the original degradation settings,
we compute FID separately for each source
and report the mean and maximum across
the five degradations.

\textbf{Results on training degradations.}
Table~\ref{tab:celeba_original} reports results on
held-out images with the degradation parameters
used during training.
SimNOT outperforms Pooled UOT in both mean
and maximum source-specific FID and in PSNR.
Conditional UOT achieves the best results
in this setting.
The degradation classifier attains $100\%$ accuracy,
so using its predictions yields the same results
as supplying the true degradation labels.

\begin{table}[t]
    \centering
    \small
    \setlength{\tabcolsep}{4pt}
    \begin{tabular}{lccc}
        \toprule
        Method
        & Mean FID $\downarrow$
        & Max FID $\downarrow$
        & PSNR $\uparrow$ \\
        \midrule
        Degraded input
        & 87.60 & 178.94 & 25.19 \\
        Pooled UOT
        & 9.96 & 14.39 & 27.51 \\
        Conditional UOT + classifier
        & \textbf{6.39} & \textbf{8.33}
        & \textbf{28.21} \\
        SimNOT (ours)
        & \underline{8.23} & \underline{11.88}
        & \underline{27.85} \\
        \bottomrule
    \end{tabular}
    \caption{
    CelebA restoration on held-out images with the training degradation parameters.
    Each of the five degradations is applied to the same full set
    of test images. Mean and Max FID aggregate the five source-specific FIDs.
    PSNR (dB) is averaged over images and degradation conditions.
    The best and second-best results among restoration
    methods are bold and underlined, respectively.
    }
    \label{tab:celeba_original}
\end{table}

\textbf{Generalization to unseen degradation strengths.}
We further evaluate the trained models on bilinear
downsampling with factors $\times3$ and $\times5$,
while training uses factor $\times4$.
All models, including the classifier, are applied
without retraining.
We additionally report LPIPS to assess perceptual
similarity to the clean reference images.

As shown in Table~\ref{tab:bilinear_generalization},
SimNOT achieves the lowest FID and LPIPS and
the highest PSNR among the restoration methods
at both factors.
The classifier fails to identify the bilinear
family in either setting.
To distinguish classification errors from
limitations of the conditional model,
we also evaluate Conditional UOT with the known
bilinear label.
It improves FID and LPIPS,
but SimNOT retains an advantage, particularly
at factor $\times5$.
Experiments with altered bicubic, JPEG, blur,
and noise parameters are reported in
Appendix~\ref{app:additional_results}.
While performance of SimNOT varies across degradations, it preserves perceptual similarity more consistently under shifts in degradation parameters.

\begin{table*}[t]
    \centering
    \small
    \setlength{\tabcolsep}{6pt}
    \begin{tabular}{lcccccc}
        \toprule
        & \multicolumn{3}{c}{Bilinear $\times3$}
        & \multicolumn{3}{c}{Bilinear $\times5$} \\
        \cmidrule(lr){2-4}
        \cmidrule(lr){5-7}
        Method
        & FID $\downarrow$
        & LPIPS $\downarrow$
        & PSNR $\uparrow$
        & FID $\downarrow$
        & LPIPS $\downarrow$
        & PSNR $\uparrow$ \\
        \midrule
        Degraded input
        & 92.82 & 0.1821 & 25.02
        & 317.44 & 0.3717 & 20.93 \\
        Pooled UOT
        & 27.71 & 0.0717 & 24.23
        & 99.22 & 0.2342 & 20.51 \\
        Conditional UOT + classifier
        & 35.16 & 0.1185 & 24.35
        & 80.87 & 0.2670 & 20.80 \\
        SimNOT (ours)
        & \textbf{18.44} & \textbf{0.0635} & \textbf{25.03}
        & \textbf{32.15} & \textbf{0.1134} & \textbf{20.93} \\
        \midrule
        Conditional UOT + known family
        & 21.42 & 0.0637 & 23.99
        & 77.23 & 0.1983 & 20.69 \\
        \bottomrule
    \end{tabular}
    \caption{
    Generalization to unseen bilinear downsampling factors.
    Models trained with factor $\times4$ are evaluated
    at factors $\times3$ and $\times5$ without retraining.
    Each setting uses all $20{,}259$ test images.
    The last row supplies the bilinear family label
    directly to Conditional UOT.
    Bold indicates the best result among restoration methods.
    }
    \label{tab:bilinear_generalization}
\end{table*}

\section{Conclusion}
\label{sec:conclusion}

We introduce SimNOT, a neural solver that brings simultaneous
OT to the continuous, sample-based setting.
Our method learns a single transport map for multiple source
distributions, with a separate distribution-matching objective
for each source.
We develop a divergence-based unbalanced formulation,
derive its exact semi-dual representation, and establish
neural approximation guarantees.
Experiments on synthetic data and image restoration
demonstrate that SimNOT learns shared transformations
from unpaired samples and handles multiple degradations
without requiring their labels at inference time.
Our work extends the scope of neural OT from pairwise
transport to simultaneous learning across multiple sources,
providing a principled framework for this broader class
of transformation problems. We discuss limitations and future research directions
in Appendix~\ref{app:limitations}.

\vspace{-3mm}
\subsection*{Reproducibility statement}

Experimental details, including data preparation,
architectures, hyperparameters, and evaluation protocols,
are provided in Appendix~\ref{app:experimental_details}.
Our code and instructions for reproducing the experiments
will be made publicly available.

\subsection*{AI use statement}

AI tools were used to assist with polishing and drafting the text, checking proofs, and supporting the implementation and analysis of experiments. All AI-assisted work was reviewed and verified by the authors, who take full responsibility for the final content of this work.

\bibliography{iclr2027_conference}
\bibliographystyle{iclr2027_conference}

\appendix

\section{Proofs of the theoretical results}
\label{app:theory}

\subsection{Proof of the semi-dual formulation}
\label{app:duality}

Put
$$
\overline{\mathbb P}
=
\frac{1}{K}\sum_{k=1}^K\mathbb P_k,
\qquad
r_k
=
\frac{d\mathbb P_k}{d\overline{\mathbb P}}.
$$
We choose Borel versions such that $0\leq r_k\leq K$ and $\sum_k r_k=K$ $\overline{\mathbb P}$-almost everywhere.

For $h=\psi$ or $h=\phi$, we use the standard variational identity
$$
D_h(\mu\|\nu)
=
\sup_{f\in C_b(E)}
\left\{
\int_E f\,d\mu
-
\int_E \bar h(f)\,d\nu
\right\}
$$
for finite nonnegative Borel measures on a Polish space $E$. Under the assumptions of Theorem~\ref{thm:duality}, these divergences are jointly lower semicontinuous for narrow convergence and satisfy the data-processing inequality.

For a probability kernel $\gamma(\cdot\mid x)$, define the probability measure
$$
m_k^\gamma(dx,dy)
=
\mathbb P_k(dx)\,\gamma(dy\mid x).
$$
Consider the convex relaxation
\begin{equation}
\label{eq:relaxed-problem}
R
=
\inf_{\gamma,\,\pi_1,\dots,\pi_K\geq0}
\frac{1}{K}\sum_{k=1}^K
\left[
\int c\,d\pi_k
+
D_\psi(\pi_k\|m_k^\gamma)
+
D_\phi((\pi_k)_y\|\mathbb P^\ast)
\right].
\end{equation}
Every admissible point of~\eqref{eq:sot_unbalanced} is admissible here: if
$$
\pi_k(dx,dy)
=
\alpha_k(dx)\,\gamma(dy\mid x),
$$
then
$$
D_\psi(\pi_k\|m_k^\gamma)
=
D_\psi(\alpha_k\|\mathbb P_k).
$$
Hence $R\leq J^\ast$.

The relaxation allows the density of $\pi_k$ with respect to $m_k^\gamma$ to depend on both $x$ and $y$. The following lemma shows that this additional freedom does not change the infimum.

\begin{lemma}[No relaxation gap]
\label{lem:no-gap}
Under the assumptions of Theorem~\ref{thm:duality},
$$
R=J^\ast.
$$
\end{lemma}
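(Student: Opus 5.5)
The inequality $R\le J^\ast$ was shown above, so it remains to prove $R\ge J^\ast$. We may assume $R<\infty$. Fix $\varepsilon>0$ and an admissible $(\gamma,\pi_1,\dots,\pi_K)$ for~\eqref{eq:relaxed-problem} whose value is within $\varepsilon$ of $R$. The plan is to build, from this relaxed point, a \emph{shared} kernel $\kappa$ and source marginals $\alpha_k$ that are admissible for~\eqref{eq:sot_unbalanced}, with value at most the relaxed value plus $O(\varepsilon)$.

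Finiteness gives $\pi_k\ll m_k^\gamma$, so we can write $d\pi_k=f_k(x,y)\,dm_k^\gamma$. If the kernel were allowed to depend on $k$, the obvious candidate would be $\alpha_k(dx)=\big(\int f_k(x,y)\gamma(dy\mid x)\big)\mathbb P_k(dx)$ together with the tilted kernels $\gamma_k(dy\mid x)\propto f_k(x,y)\gamma(dy\mid x)$. Data processing gives $D_\psi(\alpha_k\|\mathbb P_k)\le D_\psi(\pi_k\|m_k^\gamma)$, and the cost and target terms are unchanged. The whole difficulty is that the $\gamma_k$ differ across $k$. Atomlessness is what lets us replace this $k$-dependent randomization in $y$ by a common kernel combined with $k$-dependent reweighting \emph{in $x$}. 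This is the purification step, and I expect it to be the main obstacle.

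\textbf{Step 1 (reductions).} Since $\psi(0),\phi(0)<\infty$, mass of $\pi_k$ outside a large compact $C_X\times C_Y$ can be removed at cost $O(\varepsilon)$. Removing mass only lowers the transport cost, and it changes the divergence terms by amounts controlled by $\psi(0)$, $\phi(0)$ and the removed mass; this needs a short computation using convexity. On the compact set, $c$ is uniformly continuous.

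\textbf{Step 2 (discretization).} Choose a finite Borel partition $\{B_l\}$ of $C_Y$ into sets of diameter $<\delta$. Choose a finite Borel partition $\{A_i\}$ of $C_X$ into sets of diameter $<\delta$ on which each map $x\mapsto\gamma(B_l\mid x)$ and each $r_k$ oscillates by less than $\delta$; this is possible by refining a geometric partition with level sets of these finitely many measurable functions. Replace $f_k$ by its conditional expectation $\bar f_{k,i,l}$ on each $A_i\times B_l$ with respect to $m_k^\gamma$. By Jensen, or equivalently data processing for $\psi$, this does not increase the $\psi$-term. The cost changes by $O(\omega_c(\delta))$, where $\omega_c$ is the modulus of continuity of $c$ on the compact set, and the $y$-marginal changes only within each $B_l$. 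The remaining error from that $y$-marginal shift is handled in Step 4.

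\textbf{Step 3 (Lyapunov splitting).} Apply Lyapunov's convexity theorem to the atomless vector measure $(\mathbb P_1,\dots,\mathbb P_K)$ restricted to $A_i$. Let $\theta_{i,l}$ be the common, up to $\delta$, value of $\gamma(B_l\mid x)$ on $A_i$. Lyapunov gives pieces $C_{i,l}\subseteq A_i$ with $\mathbb P_k(C_{i,l})=\theta_{i,l}\mathbb P_k(A_i)$ for all $k$ simultaneously. On $C_{i,l}$ we set the shared kernel
\[
\kappa(\cdot\mid x)=\frac{\int_{A_i}\gamma(\cdot\cap B_l\mid x')\,d\overline{\mathbb P}(x')}{\int_{A_i}\gamma(B_l\mid x')\,d\overline{\mathbb P}(x')},
\]
which is a probability measure on $B_l$ and does not depend on $k$. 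We put $d\alpha_k/d\mathbb P_k=\bar f_{k,i,l}$ on $C_{i,l}$.

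With this choice, $\alpha_k(C_{i,l})\approx\pi_k(A_i\times B_l)$ for every $k$, so the joint law $\alpha_k\otimes\kappa$ agrees with the discretized $\pi_k$ up to moves within $A_i\times B_l$ and up to the $\delta$-oscillation of $r_k$ and $\gamma(B_l\mid\cdot)$. The source divergence is
\[
D_\psi(\alpha_k\|\mathbb P_k)=\sum_{i,l}\theta_{i,l}\,\mathbb P_k(A_i)\,\psi(\bar f_{k,i,l}),
\]
which matches, up to $O(\delta)$, the lower bound for $D_\psi(\pi_k\|m_k^\gamma)$ after discretization.

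\textbf{Step 4 (limits).} The target term and the discrepancy in the $y$-marginal are handled in two ways. First, $\kappa$ on $C_{i,l}$ is the $\overline{\mathbb P}$-average of $\gamma(\cdot\mid x')$ over $x'\in A_i$ restricted to $B_l$, which keeps $(\alpha_k\otimes\kappa)_y$ absolutely continuous with respect to $\mathbb P^\ast$ whenever $(\pi_k)_y$ is, with density close to the original. Second, where a direct bound is delicate, I would let $\delta\to0$ and use joint lower semicontinuity and convexity of $D_\phi$ together with the uniform $r_k$-bounds $0\le r_k\le K$ to pass to the limit. Letting $\delta\to0$ and then $\varepsilon\to0$ yields $J^\ast\le R$.

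The delicate points are making the $k$-dependence of $\gamma(B_l\mid\cdot)$-averages negligible, which is why we refine by level sets of $\gamma(B_l\mid\cdot)$ and $r_k$, and controlling the target divergence. The latter is where I would spend the most care, first trying the averaged-kernel choice above before falling back on a lower semicontinuity argument.
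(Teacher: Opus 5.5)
\textbf{There is a genuine gap: the kernel you place on $C_{i,l}$ does not control the target divergence.} Your overall scheme matches the paper's purification step: compact reduction, discretization into cells $A_i\times B_l$, then an atomless (Lyapunov) splitting of each $A_i$ with proportions common to all $k$. Your treatment of the transport cost is also fine. But the target divergence is exactly the term that needs an \emph{upper} bound.

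Finiteness of $D_\phi((\pi_k)_y\|\mathbb P^\ast)$ only gives $(\pi_k)_y\ll\mathbb P^\ast$. The reference output $(m_k^\gamma)_y=\gamma_\#\mathbb P_k$ need not be absolutely continuous with respect to $\mathbb P^\ast$, because $f_k$ may vanish precisely on its singular part. Cell-averaging $f_k$ in Step 2 and averaging $\gamma$ over $A_i$ in Step 3 both discard this information. So your claim in Step 4, that $(\alpha_k\otimes\kappa)_y\ll\mathbb P^\ast$ whenever $(\pi_k)_y\ll\mathbb P^\ast$, is false.

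For example, take $\mathbb P^\ast=\operatorname{Unif}[0,1]$, $\gamma(\cdot\mid x)=\tfrac12\delta_{1/2}+\tfrac12\mathbb P^\ast$ for all $x$, and $f_k=2\cdot\mathbf 1_{\{y\neq 1/2\}}$. Then $\pi_k=\mathbb P_k\otimes\mathbb P^\ast$ has zero target penalty. However, for an interval cell $B_l\ni 1/2$ of length $h>0$ you get $\bar f_{k,i,l}=2h/(1+h)>0$ and a kernel $\kappa$ with an atom at $1/2$. Hence $D_\phi\bigl((\alpha_k\otimes\kappa)_y\|\mathbb P^\ast\bigr)=+\infty$ for every $\delta$.

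Even when $\gamma_\#\mathbb P_k\ll\mathbb P^\ast$, your output density on $B_l$ follows the averaged density of $\gamma$, not $\rho_k=d(\pi_k)_y/d\mathbb P^\ast$. So no Jensen-type bound is available. The fallback does not help either: joint lower semicontinuity gives $D_\phi((\pi_k)_y\|\mathbb P^\ast)\le\liminf_{\delta\to0}D_\phi\bigl((\alpha_k\otimes\kappa)_y\|\mathbb P^\ast\bigr)$, which is the wrong direction for proving $J^\ast\le R$.

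The missing idea is to stop using $\gamma$ inside the $y$-cells and use the target itself, as the paper does. Discard cells with $\mathbb P^\ast(B_l)=0$; they carry no $\pi_k$-mass. On $C_{i,l}$ take the shared kernel $\nu_l=\mathbb P^\ast|_{B_l}/\mathbb P^\ast(B_l)$.
\begin{itemize}
\item The output marginal is then $\sum_l b_{kl}\nu_l$ with $b_{kl}\approx(\pi_k)_y(B_l)$. Its $\mathbb P^\ast$-density is constant on each cell, and Jensen gives $\mathbb P^\ast(B_l)\,\phi\bigl(b_{kl}/\mathbb P^\ast(B_l)\bigr)\le\int_{B_l}\phi(\rho_k)\,d\mathbb P^\ast$.
\item The cost error is at most the oscillation of $c(x,\cdot)$ on $B_l$, uniformly in $x$ on the compact set, which your diameter condition already provides.
\item With this kernel, the splitting only needs to reproduce the masses $\pi_k(A_i\times B_l)$ and the $\psi$-integrals. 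The oscillation condition on $\gamma(B_l\mid\cdot)$ then serves only to pick common proportions.
\end{itemize}

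A smaller bookkeeping point concerns the source term. First reduce $f_k$ to a bounded simple density, as the paper does. Then make that oscillation small relative to the number $L$ of $y$-cells. An absolute error $\delta$ per cell can accumulate to $L\delta\max\psi(\bar f_{k,i,l})$, so your claimed $O(\delta)$ match does not follow as stated.
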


\begin{proof}
Fix an admissible point of~\eqref{eq:relaxed-problem} with finite value and write
$$
s_k
=
\frac{d\pi_k}{dm_k^\gamma}.
$$
We first reduce to bounded finite-valued densities supported on a common compact rectangle. By tightness, choose increasing compact rectangles $E_n=C_n\times D_n$ such that $m_k^\gamma(E_n^c)\to0$ for every $k$, and set
$$
s_{k,n}
=
2^{-n}\left\lfloor 2^n\min\{s_k,n\}\right\rfloor\mathbf 1_{E_n}.
$$
Then $0\leq s_{k,n}\uparrow s_k$ $m_k^\gamma$-almost everywhere. Monotone convergence gives convergence of the transport costs. Since for a nonnegative convex function $h$ and $0\leq a\leq b$,
$$
h(a)\leq h(0)+h(b),
$$
dominated convergence gives convergence of the source divergence terms. The target measures $(s_{k,n}m_k^\gamma)_y$ increase to $(\pi_k)_y$, and the same bound applied to their densities with respect to $\mathbb P^\ast$ gives convergence of the target divergences. It is therefore enough to treat finite-valued $s_k$ supported on one compact rectangle $C\times D$.

Fix $\delta>0$. Uniform continuity of $c$ on $C\times D$ gives a finite Borel partition
$$
D=B_1\sqcup\cdots\sqcup B_L
$$
such that
$$
\sup_{x\in C}\sup_{y,y'\in B_\ell}
|c(x,y)-c(x,y')|
\leq\delta
$$
for every $\ell$. Cells with $\mathbb P^\ast(B_\ell)=0$ carry no transported mass and may be ignored. For the remaining cells, put
$$
\nu_\ell
=
\frac{\mathbb P^\ast|_{B_\ell}}{\mathbb P^\ast(B_\ell)},
\qquad
\bar c_\ell(x)
=
\int_{B_\ell}c(x,y)\,d\nu_\ell(y).
$$

For fixed $x$, sample $y\sim\gamma(\cdot\mid x)$ and record the finite action
$$
(\ell,s_1(x,y),\dots,s_K(x,y))
$$
when $y\in B_\ell$; points carrying no transported mass are assigned one additional zero action. This gives a randomized rule with a finite action space. Since $\overline{\mathbb P}$ is atomless, it can be approximated by measurable deterministic rules while preserving, up to an arbitrarily small error, the finitely many integrals corresponding to
$$
r_k(x)\psi(z_k),
\qquad
r_k(x)z_k\mathbf 1_{\{\ell=j\}},
\qquad
r_k(x)z_k\bar c_\ell(x).
$$
One may obtain this directly by approximating these payoffs by simple functions on a common finite partition of $C$ and splitting each atomless partition cell according to the required action probabilities.

Let
$$
x\mapsto(\ell_n(x),z_{1,n}(x),\dots,z_{K,n}(x))
$$
be such deterministic rules. Define
$$
\alpha_{k,n}(dx)
=
z_{k,n}(x)\,\mathbb P_k(dx)
$$
on $C$ and set $\alpha_{k,n}=0$ outside $C$. At a nonzero action, define the common kernel by
$$
\gamma_n(dy\mid x)=\nu_{\ell_n(x)}(dy),
$$
and choose it arbitrarily at the zero action and outside $C$.

By construction, the source divergence terms converge to $D_\psi(\pi_k\|m_k^\gamma)$. The target measures have the form
$$
(\alpha_{k,n}(dx)\gamma_n(dy\mid x))_y
=
\sum_{\ell=1}^L b_{k\ell,n}\nu_\ell,
$$
where
$$
b_{k\ell,n}
\longrightarrow
b_{k\ell}:=(\pi_k)_y(B_\ell).
$$
If $\rho_k=d(\pi_k)_y/d\mathbb P^\ast$, then Jensen's inequality gives
$$
\mathbb P^\ast(B_\ell)
\phi\!\left(
\frac{b_{k\ell}}{\mathbb P^\ast(B_\ell)}
\right)
\leq
\int_{B_\ell}\phi(\rho_k)\,d\mathbb P^\ast.
$$
Hence the limiting target divergence is no larger than $D_\phi((\pi_k)_y\|\mathbb P^\ast)$.

Finally, the limiting transport cost differs from $\int c\,d\pi_k$ by at most
$$
\delta\,\pi_k(\mathcal X\times\mathcal Y).
$$
Letting the purification error tend to zero, then $\delta\downarrow0$, and finally removing the initial finite-valued compact approximation gives $J^\ast\leq R$. Together with $R\leq J^\ast$, this proves the lemma.
\end{proof}

\begin{proof}[Proof of Theorem~\ref{thm:duality}]
By Lemma~\ref{lem:no-gap}, it remains to derive the dual of the convex problem~\eqref{eq:relaxed-problem}.

We first explain the argument for a compact target. Write a shared kernel as
$$
q(dx,dy)
=
\overline{\mathbb P}(dx)\,\gamma(dy\mid x)
$$
and set
$$
Q
=
\left\{
q\in\mathcal P(\mathcal X\times\mathcal Y):
q_x=\overline{\mathbb P}
\right\}.
$$
Then
$$
m_k^\gamma=r_k q.
$$
The set $Q$ is convex and narrowly compact. Moreover, $q\mapsto r_kq$ is narrowly continuous on $Q$: approximate the bounded Borel function $r_k$ in $L^1(\overline{\mathbb P})$ by bounded continuous functions and use that every $q\in Q$ has the same $x$-marginal.

For finite nonnegative measures $\pi_k$ on $\mathcal X\times\mathcal Y$ and $\beta_k$ on $\mathcal Y$, define
$$
E(q,\boldsymbol\pi,\boldsymbol\beta)
=
\sum_{k=1}^K
\left[
\int c\,d\pi_k
+
D_\psi(\pi_k\|r_kq)
+
D_\phi(\beta_k\|\mathbb P^\ast)
\right].
$$
Its sublevel sets are narrowly compact. Indeed, superlinearity of $\psi$ gives uniform bounds on the masses and tails of $\pi_k$ relative to the uniformly tight family $\{r_kq:q\in Q\}$; the same argument applies to $\beta_k$ relative to $\mathbb P^\ast$. Lower semicontinuity follows from lower semicontinuity of the cost and divergences and from continuity of $q\mapsto r_kq$.

For signed finite measures $\eta_1,\dots,\eta_K$ on $\mathcal Y$, define
$$
W(\boldsymbol\eta)
=
\inf_{\substack{q\in Q,\;\pi_k,\beta_k\geq0\\(\pi_k)_y-\beta_k=\eta_k}}
E(q,\boldsymbol\pi,\boldsymbol\beta).
$$
The function $W$ is proper, convex and lower semicontinuous in the weak topology induced by $C(\mathcal Y)^K$. Its finite infima are attained by compactness of the energy sublevels, and
$$
W(0)=KR.
$$
Fenchel--Moreau therefore gives
$$
KR
=
\sup_{\mathbf v\in C(\mathcal Y)^K}
\{-W^\ast(\mathbf v)\}.
$$

We compute the conjugate pointwise. For fixed $q$ and $v\in C(\mathcal Y)$,
$$
\sup_{\pi\geq0}
\left\{
\int v(y)\,d\pi
-
\int c\,d\pi
-
D_\psi(\pi\|r_kq)
\right\}
=
\int
\bar\psi(v(y)-c(x,y))
\,d(r_kq)(x,y).
$$
To justify the interchange of supremum and integration, note that $v-c$ is bounded above. Superlinearity of $\psi$ therefore bounds all pointwise maximizers in a common compact interval. Using a countable dense subset of the effective domain (including its finite endpoints), one obtains measurable finite-valued near-maximizers. Choosing them at least as good as $t=0$ also gives a uniform bound on $c(x,y)t$, so the corresponding measures have finite transport cost. The resulting integrals converge to the pointwise conjugate. Similarly,
$$
\sup_{\beta\geq0}
\left\{
-\int v\,d\beta
-
D_\phi(\beta\|\mathbb P^\ast)
\right\}
=
\int\bar\phi(-v)\,d\mathbb P^\ast.
$$
Substitution into the Fenchel--Moreau formula yields
$$
R
=
\sup_{\mathbf v\in C(\mathcal Y)^K}
\inf_\gamma
\mathcal J(\mathbf v,\gamma)
$$
when $\mathcal Y$ is compact.

Now suppose that $\mathcal Y$ is noncompact. Since it is a closed subset of Euclidean space, its one-point compactification
$$
\widehat{\mathcal Y}=\mathcal Y\cup\{\infty\}
$$
is a compact metric space. Extend $\mathbb P^\ast$ by zero at $\infty$ and set
$$
\widehat c(x,y)=c(x,y)
\quad\text{for }y\in\mathcal Y,
\qquad
\widehat c(x,\infty)=0.
$$
The extended cost is lower semicontinuous because $c\geq0$. Compactification does not change the relaxed value. Indeed, every finite-value candidate has $\pi_k(\mathcal X\times\{\infty\})=0$ because $(\pi_k)_y\ll\mathbb P^\ast$. If the reference kernel places mass at $\infty$, send this mass to an arbitrary fixed point $y_0\in\mathcal Y$. The transport plans and their target marginals are unchanged, while the data-processing inequality shows that the source divergence cannot increase.

Applying the compact argument on $\widehat{\mathcal Y}$ and then restricting the potentials and kernels back to $\mathcal Y$ gives
$$
R
\leq
\sup_{\mathbf v\in C_b(\mathcal Y)^K}
\inf_\gamma\mathcal J(\mathbf v,\gamma).
$$
Indeed, restrictions of functions in $C(\widehat{\mathcal Y})$ belong to $C_b(\mathcal Y)$, while kernels supported on $\mathcal Y$ form a subclass of the compactified kernels. For the reverse bound, let $v_k\in C_b(\mathcal Y)$ and use the pointwise Fenchel inequalities
$$
\int c\,d\pi_k
+
D_\psi(\pi_k\|m_k^\gamma)
\geq
\int v_k\,d(\pi_k)_y
-
\int\bar\psi(v_k-c)\,dm_k^\gamma
$$
and
$$
D_\phi((\pi_k)_y\|\mathbb P^\ast)
\geq
-\int v_k\,d(\pi_k)_y
-
\int\bar\phi(-v_k)\,d\mathbb P^\ast.
$$
After summing and taking infima, this gives
$$
\sup_{\mathbf v\in C_b(\mathcal Y)^K}
\inf_\gamma\mathcal J(\mathbf v,\gamma)
\leq R.
$$
Together with the compactified lower bound and Lemma~\ref{lem:no-gap},
$$
J^\ast
=
\sup_{\mathbf v\in C_b(\mathcal Y)^K}
\inf_\gamma\mathcal J(\mathbf v,\gamma).
$$

It remains to remove the inner randomization. Fix $\mathbf v\in C_b(\mathcal Y)^K$ and define
$$
H_{\mathbf v}(x,y)
=
\frac{1}{K}\sum_{k=1}^K
r_k(x)\bar\psi(v_k(y)-c(x,y)).
$$
For each fixed $x$, the function $y\mapsto H_{\mathbf v}(x,y)$ is continuous, and $H_{\mathbf v}$ is bounded because
$$
-\psi(0)
\leq
\bar\psi(v_k(y)-c(x,y))
\leq
\bar\psi(\|v_k\|_\infty).
$$
Choose a countable dense set $\{y_j:j\geq1\}\subseteq\mathcal Y$. Then
$$
h(x)
=
\sup_{y\in\mathcal Y}H_{\mathbf v}(x,y)
=
\sup_{j\geq1}H_{\mathbf v}(x,y_j)
$$
is Borel. For every $\varepsilon>0$, choosing the first $y_j$ with
$$
H_{\mathbf v}(x,y_j)>h(x)-\varepsilon
$$
defines a Borel map $T_\varepsilon:\mathcal X\to\mathcal Y$. Hence
$$
\sup_\gamma
\int H_{\mathbf v}(x,y)\,\gamma(dy\mid x)\,d\overline{\mathbb P}(x)
=
\sup_T
\int H_{\mathbf v}(x,T(x))\,d\overline{\mathbb P}(x),
$$
where the second supremum is over Borel maps. Since the target term in $\mathcal J$ does not depend on the kernel,
$$
\inf_\gamma\mathcal J(\mathbf v,\gamma)
=
\inf_T\mathcal J(\mathbf v,T).
$$
This proves Theorem~\ref{thm:duality}.
\end{proof}

\subsection{Proof of the approximation results}
\label{app:approximation}

\begin{proof}[Proof of Theorem~\ref{thm:neural-approximation}]
By the randomization lemma, there exists a measurable map
$$
T:\mathcal X\times[0,1]\to\mathcal Y
$$
such that
$$
(T(x,\cdot))_\#\mathbb S
=
\gamma(\cdot\mid x).
$$
Put
$$
\overline{\mathbb P}
=
\frac{1}{K}\sum_{k=1}^K\mathbb P_k,
\qquad
\mu
=
\overline{\mathbb P}\otimes\mathbb S.
$$
Since $\mathcal Y=[a,b]^{D_y}$ is bounded, $T\in L^2(\mu;\mathbb R^{D_y})$.

We claim that $T$ can be approximated arbitrarily well in $L^2(\mu)$ by finite ReLU networks with output in $\mathcal Y$. By Lusin's theorem, for every $\eta>0$ there is a compact set $K_\eta\subseteq\mathcal X\times[0,1]$ such that $T|_{K_\eta}$ is continuous and $\mu(K_\eta^c)<\eta$. The usual ReLU universal approximation theorem gives a finite network that approximates $T$ uniformly on $K_\eta$. Clipping each output coordinate to $[a,b]$ is itself a ReLU operation and cannot increase the error to a point of $\mathcal Y$. Since both $T$ and the clipped network are uniformly bounded, the contribution of $K_\eta^c$ to the $L^2$ error tends to zero with $\eta$. Thus, for any $\delta>0$, we can choose $T_\theta$ such that
$$
\int
\|T_\theta(x,z)-T(x,z)\|^2
\,d\overline{\mathbb P}(x)\,d\mathbb S(z)
<\delta.
$$

Because $\mathbb P_k\leq K\overline{\mathbb P}$,
$$
\int
\|T_\theta(x,z)-T(x,z)\|^2
\,d\mathbb P_k(x)\,d\mathbb S(z)
< K\delta
$$
for every $k$. Choose $\delta<\varepsilon/K$.

For fixed $x$, the pair
$$
(T_\theta(x,Z),T(x,Z)),
\qquad Z\sim\mathbb S,
$$
is a coupling of $\gamma_\theta(\cdot\mid x)$ and $\gamma(\cdot\mid x)$. Therefore
$$
W_2^2\!\left(
\gamma_\theta(\cdot\mid x),
\gamma(\cdot\mid x)
\right)
\leq
\mathbb E
\|T_\theta(x,Z)-T(x,Z)\|^2.
$$
Integrating with respect to $\mathbb P_k$ proves~\eqref{eq:kernel-W2}. Sampling additionally $X\sim\mathbb P_k$ gives a coupling of the two output laws with the same quadratic cost, which proves~\eqref{eq:output-W2}.
\end{proof}

\begin{proof}[Proof of Corollary~\ref{cor:value-consistency}]
For fixed $\mathbf v\in C(\mathcal Y)^K$, set
$$
d_n(\mathbf v)
=
\inf_{T\in\mathcal T_n}\mathcal J(\mathbf v,T),
\qquad
d(\mathbf v)
=
\inf_T\mathcal J(\mathbf v,T),
$$
where the second infimum is over all measurable maps $T:\mathcal X\times[0,1]\to\mathcal Y$.

The proof of Theorem~\ref{thm:neural-approximation} shows that every such measurable map can be approximated in $L^2(\overline{\mathbb P}\otimes\mathbb S)$ by maps from $\bigcup_n\mathcal T_n$. Hence the approximating maps converge in probability under every $\mathbb P_k\otimes\mathbb S$. Since $c$ and the potentials are continuous and
$$
-\psi(0)
\leq
\bar\psi(v_k(y)-c(x,y))
\leq
\bar\psi(\|v_k\|_\infty),
$$
the corresponding integrands are uniformly bounded. It follows that the expectations converge, and therefore
$$
d_n(\mathbf v)\downarrow d(\mathbf v).
$$

Fix $m$. Compactness of $\mathcal V_m$ in the uniform norm gives a common bound on its potentials. Superlinearity of $\psi$ implies that $\bar\psi$ is Lipschitz on every interval $(-\infty,B]$, while $\bar\phi$ is Lipschitz on bounded intervals. Consequently, $d_n$ and $d$ are uniformly Lipschitz on $\mathcal V_m$. Since $d_n\downarrow d$ and $\mathcal V_m$ is compact, Dini's theorem gives uniform convergence and therefore
$$
\lim_{n\to\infty}D_{m,n}
=
\sup_{\mathbf v\in\mathcal V_m}d(\mathbf v).
$$

Finally, $\bigcup_m\mathcal V_m$ is uniformly dense in $C(\mathcal Y)^K$ and $d$ is continuous. Hence
$$
\lim_{m\to\infty}
\sup_{\mathbf v\in\mathcal V_m}d(\mathbf v)
=
\sup_{\mathbf v\in C(\mathcal Y)^K}d(\mathbf v).
$$
Because $\mathcal Y$ is compact, $C(\mathcal Y)=C_b(\mathcal Y)$, and Theorem~\ref{thm:duality} identifies the last quantity with $J^\ast$. This proves~\eqref{eq:value-consistency}.
\end{proof}

\subsection{Proof of the deterministic structure theorem}
\label{app:monge}

\begin{proof}[Proof of Theorem~\ref{thm:monge}]
Write
$$
\alpha_k=\alpha_k^\ast,
\qquad
M=\sum_{k=1}^K\mathbb P_k,
\qquad
\sigma=\sum_{k=1}^K\alpha_k.
$$
Finite KL divergence gives
$$
\sigma\ll M\ll\mathcal L^d.
$$
If $\sigma=0$, the conclusion is vacuous. Assume $\sigma\neq0$ and define on $\Omega$
$$
a_k(x)
=
\frac{p_k(x)}{\sum_jp_j(x)},
\qquad
\lambda_k(x)
=
\frac{d\alpha_k}{d\sigma}(x).
$$
Then $a_k>0$, $\sum_k a_k=1$, and $\sum_k\lambda_k=1$ $\sigma$-almost everywhere. The functions $a_k$ are $C^2$ on $\Omega$.

Let $m=d\sigma/dM$. Since
$$
\frac{d\alpha_k}{d\mathbb P_k}
=
\frac{m\lambda_k}{a_k},
$$
a direct calculation gives
\begin{equation}
\label{eq:KL-chain}
\sum_{k=1}^K
\operatorname{KL}(\alpha_k\|\mathbb P_k)
=
\operatorname{KL}(\sigma\|M)
+
\int_\Omega
\sum_{k=1}^K
\lambda_k(x)\log\frac{\lambda_k(x)}{a_k(x)}
\,d\sigma(x).
\end{equation}

Let
$$
\Delta_K
=
\left\{
z\in[0,1]^K:\sum_{k=1}^Kz_k=1
\right\}
$$
be the probability simplex, and write $\lambda(x)=(\lambda_1(x),\dots,\lambda_K(x))$. Define the finite measure
$$
\zeta(dx,dy,dz)
=
\sigma(dx)\,\gamma^\ast(dy\mid x)\,\delta_{\lambda(x)}(dz)
$$
on $\Omega\times\mathbb R^d\times\Delta_K$, and let $\xi$ be its $(y,z)$-marginal. Consider the cost
$$
\widetilde c(x;y,z)
=
\|x-y\|^2
+
\sum_{k=1}^K
z_k\log\frac{z_k}{a_k(x)},
\qquad
0\log0:=0.
$$
We first show that $\zeta$ is an optimal transport plan between $\sigma$ and $\xi$ for this cost.

Let $\widetilde\zeta\in\Pi(\sigma,\xi)$ and let $\eta$ be its $(x,y)$-marginal. Write
$$
\eta(dx,dy)
=
\sigma(dx)\,\widetilde\gamma(dy\mid x)
$$
and define
$$
\widetilde\pi_k(dx,dy)
=
\int_{\Delta_K}z_k\,\widetilde\zeta(dx,dy,dz).
$$
If
$$
\overline z_k(x,y)
=
\mathbb E_{\widetilde\zeta}[z_k\mid x,y],
$$
then
$$
\widetilde\pi_k(dx,dy)
=
\overline z_k(x,y)\,\eta(dx,dy),
\qquad
\sum_k\overline z_k=1.
$$
Because the $(y,z)$-marginal of $\widetilde\zeta$ is fixed,
$$
(\widetilde\pi_k)_y(B)
=
\int_{B\times\Delta_K}z_k\,d\xi(y,z),
$$
so these target marginals are the same as those of the original optimal plans.

Using the same density calculation as in~\eqref{eq:KL-chain}, now on $\mathcal X\times\mathbb R^d$, we obtain
$$
\begin{aligned}
&\sum_{k=1}^K
\operatorname{KL}\!\left(
\widetilde\pi_k
\middle\|
\mathbb P_k(dx)\,\widetilde\gamma(dy\mid x)
\right)
\\
&\qquad=
\operatorname{KL}(\sigma\|M)
+
\int
\sum_{k=1}^K
\overline z_k(x,y)
\log\frac{\overline z_k(x,y)}{a_k(x)}
\,d\eta(x,y)
\\
&\qquad\leq
\operatorname{KL}(\sigma\|M)
+
\int
\sum_{k=1}^K
z_k\log\frac{z_k}{a_k(x)}
\,d\widetilde\zeta(x,y,z),
\end{aligned}
$$
where the inequality is conditional Jensen. Moreover,
$$
\sum_{k=1}^K
\int\|x-y\|^2\,d\widetilde\pi_k(x,y)
=
\int\|x-y\|^2\,d\widetilde\zeta(x,y,z).
$$
Thus, if $\widetilde\zeta$ had strictly smaller $\widetilde c$-cost than $\zeta$, the measures $\widetilde\pi_k$ together with the common kernel $\widetilde\gamma$ would give an admissible point of the relaxed problem~\eqref{eq:relaxed-problem} with value strictly below $J^\ast$. This contradicts Lemma~\ref{lem:no-gap}. Hence $\zeta$ is optimal.

It remains to show that its $y$-coordinate is uniquely determined by $x$. Cover $\Omega$ by countably many open balls $B_j$ with $\overline B_j\subset\Omega$, and choose larger open balls $U_j$ such that
$$
\overline B_j\subset U_j,
\qquad
\overline U_j\subset\Omega.
$$
For $\ell\geq1$, let
$$
D_\ell
=
\{y\in\mathbb R^d:\|y\|\leq\ell\}.
$$
The restriction of $\zeta$ to
$$
\overline B_j\times D_\ell\times\Delta_K
$$
is optimal between its own marginals: otherwise it could be replaced by a cheaper coupling without changing the full marginals of $\zeta$.

On $\overline U_j\times D_\ell\times\Delta_K$, the cost $\widetilde c$ is continuous and continuously differentiable in $x$, with bounded $x$-gradient. Compact Kantorovich duality gives dual potentials. Taking the cost transform on the source side gives a source potential $u_{j\ell}$ that is Lipschitz on $U_j$, hence differentiable Lebesgue-almost everywhere.

At a differentiability point where complementary slackness holds,
$$
\nabla u_{j\ell}(x)
=
2(x-y)
-
\sum_{k=1}^Kz_k\nabla\log a_k(x).
$$
For $\zeta$ we have $z=\lambda(x)$ almost everywhere, so the conditional measure can charge only
$$
y
=
x
-
\frac{1}{2}
\left[
\nabla u_{j\ell}(x)
+
\sum_{k=1}^K
\lambda_k(x)\nabla\log a_k(x)
\right].
$$
Since $\sigma\ll\mathcal L^d$, after removing one $\sigma$-null set for the countable family $(j,\ell)$, this conclusion holds whenever the corresponding restricted conditional measure is nonzero. The sets $D_\ell$ increase to $\mathbb R^d$, and their restrictions are consistent. Therefore $\gamma^\ast(\cdot\mid x)$ is a Dirac mass for $\sigma$-almost every $x$.

A Borel probability kernel that is almost everywhere a Dirac mass induced, after modification on a null set, by a Borel map. Hence there exists $T^\ast:\mathbb R^d\to\mathbb R^d$ satisfying~\eqref{eq:monge-conclusion}, and the formula for the optimal plans follows immediately.
\end{proof}

\section{Extended Discussion of Related Work}
\label{app:related_work}

\textbf{OT barycenters.}
Barycenter methods also relate several distributions through OT.
Continuous neural solvers estimate a common distribution by
minimizing an average of transport
costs~\citep{fan2021scalable, kolesov2024energyguided, kolesovestimating}; semi-unbalanced extensions allow
robust barycenter estimation~\citep{gazdieva2025robust}.
In these problems, the common distribution is an unknown to
be optimized.
In our setting, the target distribution is prescribed and
accessible through samples.
Moreover, the simultaneous constraint requires a shared
transformation, whereas the standard barycenter problem allows
separate transport plans for different input distributions.

\textbf{All-in-one image restoration.}
Learning a single model for multiple degradations is the aim
of all-in-one image restoration.
AirNet~\citep{li2022all} learns degradation representations,
while PromptIR~\citep{potlapalli2023promptir} uses learned prompts
to adapt restoration to the input.
Their restoration networks are trained with paired degraded
and clean images.
OT-based approaches also address this task:
DA-RCOT~\citep{tang2025degradation} considers both unpaired and
paired restoration, using transport residuals to inform the
cost and condition the restoration map.
Our focus is on imposing separate distribution-matching
objectives for the individual sources within a simultaneous
formulation.
Another closely related approach,
BaryIR~\citep{tang2026learning}, learns a shared map and multiple
potentials to construct a Wasserstein barycenter representation
of degraded features.
It combines this representation with residual features and
uses paired supervision for restoration.
SimNOT instead learns transport to a prescribed target
distribution from unpaired samples.
Thus, the key distinction from BaryIR lies in the transport
problem and supervision, rather than in sharing a map across
degradations.

\section{Additional Experimental Results}
\label{app:additional_results}

\subsection{Results for Individual Degradations}
\label{app:per_degradation_results}

Table~\ref{tab:celeba_per_degradation} provides
the source-specific results underlying
Table~\ref{tab:celeba_original}.
It shows that SimNOT achieves lower FID and higher PSNR than Pooled UOT
for JPEG compression, Gaussian blur, and Gaussian noise,
although Pooled UOT performs slightly better for the
downsampling degradations.
Besides, SimNOT obtains the best FID and PSNR for Gaussian noise
and the best PSNR for JPEG compression, while conditional UOT performs best in FID for the remaining
four degradations.

\begin{table*}[h]
    \centering
    \small
    \setlength{\tabcolsep}{6pt}
    \begin{tabular}{lccccc}
        \toprule
        Method
        & Bicubic & Bilinear & JPEG & Blur & Noise \\
        \midrule
        \multicolumn{6}{c}{FID $\downarrow$} \\
        \midrule
        Degraded input
        & 124.86 & 178.94 & 31.19 & 46.47 & 56.54 \\
        Pooled UOT
        & \underline{10.96} & \underline{8.54}
        & 9.15 & 6.75 & 14.39 \\
        Conditional UOT + classifier
        & \textbf{8.33} & \textbf{8.18}
        & \textbf{5.92} & \textbf{3.48}
        & \underline{6.02} \\
        SimNOT (ours)
        & 11.28 & 11.88
        & \underline{6.59} & \underline{6.27}
        & \textbf{5.14} \\
        \midrule
        \multicolumn{6}{c}{PSNR (dB) $\uparrow$} \\
        \midrule
        Degraded input
        & 22.89 & 23.02 & 28.50 & 25.22 & 26.32 \\
        Pooled UOT
        & \underline{22.96} & \textbf{23.74}
        & 29.22 & 29.84 & 31.78 \\
        Conditional UOT + classifier
        & \textbf{22.99} & \underline{23.72}
        & \underline{29.55} & \textbf{31.34}
        & \underline{33.42} \\
        SimNOT (ours)
        & 22.70 & 23.26
        & \textbf{29.59} & \underline{29.86}
        & \textbf{33.85} \\
        \bottomrule
    \end{tabular}
    \caption{
    Source-specific results on the original
    degradation settings.
    The \underline{\textit{evaluation protocol}} is described
    in Appendix~\ref{app:celeba_details}.
    The best and second-best results among restoration
    methods are bold and underlined, respectively.
    Rankings use unrounded values.
    }
    \label{tab:celeba_per_degradation}
\end{table*}

\subsection{Generalization to Unseen Degradation Strengths}
\label{app:degradation_shifts}

We extend the bilinear experiments
in Table~\ref{tab:bilinear_generalization}
to altered bicubic downsampling factors,
JPEG quality levels, blur standard deviations,
and noise standard deviations.
Tables~\ref{tab:shifts_milder}
and~\ref{tab:shifts_stronger} report results
for milder and stronger degradations, respectively.
Each setting uses all $20{,}259$ test images
and the same trained models, without fine-tuning.

The results show that SimNOT achieves lower LPIPS than classifier-routed Conditional
UOT in $9/10$ conditions and Pooled UOT in $7/10$.
SimNOT also has the highest mean PSNR among the blind methods,
although Pooled UOT achieves a slightly lower mean FID.
The known-family reference has better mean scores in all three metrics.
Thus, among the blind methods, SimNOT's most consistent advantage is in
perceptual similarity measured by LPIPS.

\begin{table*}[t]
    \centering
    \small
    \setlength{\tabcolsep}{10pt}
    \begin{tabular}{lccc}
        \toprule
        Method
        & FID $\downarrow$
        & LPIPS $\downarrow$
        & PSNR $\uparrow$ \\
        \midrule
        \multicolumn{4}{c}{Bicubic $\times3$
        (training: $\times4$)} \\
        \midrule
        Degraded input
        & 61.46 & 0.16835 & 25.24 \\
        Pooled UOT
        & \underline{21.06} & \underline{0.07670}
        & \textbf{23.10} \\
        Conditional UOT + classifier
        & 113.85 & 0.20161 & 20.25 \\
        SimNOT (ours)
        & \textbf{18.72} & \textbf{0.06642}
        & \underline{22.49} \\
        \cmidrule(lr){1-4}
        Conditional UOT + known family
        & 9.27 & 0.04598 & 23.74 \\
        \midrule
        \multicolumn{4}{c}{JPEG quality $30$
        (training: $25$)} \\
        \midrule
        Degraded input
        & 27.22 & 0.03445 & 29.11 \\
        Pooled UOT
        & 8.30 & 0.01726 & 29.74 \\
        Conditional UOT + classifier
        & \textbf{5.48} & \textbf{0.01546}
        & \underline{30.00} \\
        SimNOT (ours)
        & \underline{6.40} & \underline{0.01598}
        & \textbf{30.02} \\
        \cmidrule(lr){1-4}
        Conditional UOT + known family
        & 5.48 & 0.01546 & 30.00 \\
        \midrule
        \multicolumn{4}{c}{Gaussian blur $\sigma=1.25$
        (training: $1.5$)} \\
        \midrule
        Degraded input
        & 31.02 & 0.19583 & 26.62 \\
        Pooled UOT
        & \underline{6.80} & \textbf{0.01804}
        & \textbf{30.26} \\
        Conditional UOT + classifier
        & \textbf{6.61} & 0.03509
        & \underline{29.09} \\
        SimNOT (ours)
        & 8.95 & \underline{0.03057} & 28.41 \\
        \cmidrule(lr){1-4}
        Conditional UOT + known family
        & 6.61 & 0.03508 & 29.09 \\
        \midrule
        \multicolumn{4}{c}{Gaussian noise $\sigma=0.04$
        (training: $0.05$)} \\
        \midrule
        Degraded input
        & 48.45 & 0.02418 & 28.20 \\
        Pooled UOT
        & 6.60 & 0.00867 & 32.85 \\
        Conditional UOT + classifier
        & \textbf{2.85} & \underline{0.00543}
        & \underline{34.60} \\
        SimNOT (ours)
        & \underline{3.17} & \textbf{0.00510}
        & \textbf{35.08} \\
        \cmidrule(lr){1-4}
        Conditional UOT + known family
        & 2.85 & 0.00543 & 34.60 \\
        \bottomrule
    \end{tabular}
    \caption{
    Generalization to milder degradations.
    All models are evaluated without retraining
    on $20{,}259$ images per setting.
    The best and second-best results among blind
    restoration methods are bold and underlined,
    respectively.
    The known-family variant receives the true
    degradation family and is reported separately
    as a reference.
    Rankings use unrounded values.
    }
    \label{tab:shifts_milder}
\end{table*}

\begin{table*}[t]
    \centering
    \small
    \setlength{\tabcolsep}{10pt}
    \begin{tabular}{lccc}
        \toprule
        Method
        & FID $\downarrow$
        & LPIPS $\downarrow$
        & PSNR $\uparrow$ \\
        \midrule
        \multicolumn{4}{c}{Bicubic $\times5$
        (training: $\times4$)} \\
        \midrule
        Degraded input
        & 238.78 & 0.41126 & 20.67 \\
        Pooled UOT
        & \textbf{40.31} & \textbf{0.16003}
        & \textbf{20.09} \\
        Conditional UOT + classifier
        & 177.46 & 0.27675 & \underline{19.69} \\
        SimNOT (ours)
        & \underline{151.24} & \underline{0.24771}
        & 19.49 \\
        \cmidrule(lr){1-4}
        Conditional UOT + known family
        & 21.89 & 0.09326 & 20.37 \\
        \midrule
        \multicolumn{4}{c}{JPEG quality $20$
        (training: $25$)} \\
        \midrule
        Degraded input
        & 36.78 & 0.05387 & 27.73 \\
        Pooled UOT
        & 10.94 & 0.02463 & 28.52 \\
        Conditional UOT + classifier
        & \textbf{7.16} & \underline{0.02094}
        & \underline{28.89} \\
        SimNOT (ours)
        & \underline{7.30} & \textbf{0.02092}
        & \textbf{28.95} \\
        \cmidrule(lr){1-4}
        Conditional UOT + known family
        & 7.16 & 0.02094 & 28.89 \\
        \midrule
        \multicolumn{4}{c}{Gaussian blur $\sigma=1.75$
        (training: $1.5$)} \\
        \midrule
        Degraded input
        & 64.50 & 0.30943 & 24.13 \\
        Pooled UOT
        & \underline{10.91} & \textbf{0.03433}
        & 27.85 \\
        Conditional UOT + classifier
        & \textbf{9.88} & 0.04010 & \textbf{28.86} \\
        SimNOT (ours)
        & 11.82 & \underline{0.03488}
        & \underline{28.38} \\
        \cmidrule(lr){1-4}
        Conditional UOT + known family
        & 9.88 & 0.04010 & 28.86 \\
        \midrule
        \multicolumn{4}{c}{Gaussian noise $\sigma=0.06$
        (training: $0.05$)} \\
        \midrule
        Degraded input
        & 65.16 & 0.06515 & 24.79 \\
        Pooled UOT
        & 24.77 & 0.01584 & 30.65 \\
        Conditional UOT + classifier
        & \underline{14.42} & \underline{0.00932}
        & \underline{32.18} \\
        SimNOT (ours)
        & \textbf{11.26} & \textbf{0.00867}
        & \textbf{32.60} \\
        \cmidrule(lr){1-4}
        Conditional UOT + known family
        & 14.42 & 0.00932 & 32.18 \\
        \bottomrule
    \end{tabular}
    \caption{
    Generalization to stronger degradations.
    The evaluation protocol
    follows Table~\ref{tab:shifts_milder}.
    }
    \label{tab:shifts_stronger}
\end{table*}

\subsection{Degradation Classification Results}
\label{app:classifier_results}

The degradation classifier achieves $100\%$ accuracy
on the test set with the original degradation
parameters.
Table~\ref{tab:classifier_shifts} reports its accuracy
under parameter shifts, measured against the known
degradation family.

Classification remains accurate for the tested
JPEG, blur, and noise settings.
In contrast, changing the downsampling factor
substantially affects classification for both
interpolation modes.
For bilinear factors $\times3$ and $\times5$,
no test image is classified as bilinear:
approximately $80\%$ and $85\%$ of inputs,
respectively, are classified as JPEG.
For bicubic factor $\times3$, most inputs are
classified as blur; at factor $\times5$,
all inputs are classified as blur.
This motivates reporting Conditional UOT
with both predicted and known family labels
to separate classification errors from
restoration performance.

\begin{table}[t]
    \centering
    \small
    \begin{tabular}{lcc}
        \toprule
        Degradation
        & Test parameter
        & Accuracy (\%) $\uparrow$ \\
        \midrule
        Bicubic & $\times3$ & 0.035 \\
        Bicubic & $\times5$ & 0.000 \\
        Bilinear & $\times3$ & 0.000 \\
        Bilinear & $\times5$ & 0.000 \\
        JPEG & $30$ & 100.000 \\
        JPEG & $20$ & 100.000 \\
        Gaussian blur & $1.25$ & 99.995 \\
        Gaussian blur & $1.75$ & 100.000 \\
        Gaussian noise & $0.04$ & 100.000 \\
        Gaussian noise & $0.06$ & 100.000 \\
        \bottomrule
    \end{tabular}
    \caption{
    Degradation classification accuracy under
    parameter shifts.
    Each setting contains $20{,}259$ test images.
    The classifier is trained only on the fixed
    degradation parameters specified
    in Appendix~\ref{app:celeba_details}.
    }
    \label{tab:classifier_shifts}
\end{table}

\section{Experimental Details}
\label{app:experimental_details}
\subsection{Gaussian-to-Swiss-Roll Experiment}
\label{app:swiss_roll}

\textbf{Distributions.}
We use five Gaussian source distributions
$\mathbb P_k=\mathcal N(\mu_k,I_2)$ with means
\[
\mu_1=(0,0),\quad
\mu_2=(3,0),\quad
\mu_3=(0,3),\quad
\mu_4=(-3,0),\quad
\mu_5=(0,-3).
\]
To construct the target distribution, we first sample
\[
\widetilde y=(t\cos t,t\sin t)+0.75\,\epsilon,
\qquad
t\sim\operatorname{Unif}[1.5\pi,4.5\pi],
\qquad
\epsilon\sim\mathcal N(0,I_2),
\]
where $t$ and $\epsilon$ are independent.
We then standardize each coordinate using the empirical
mean and standard deviation computed from a fixed
calibration sample of $20{,}000$ points.
These normalization statistics remain fixed throughout
training and evaluation.
Source and target samples are generated independently
on demand.

\textbf{Parametrization.}
The shared map $T_\theta:\mathbb R^2\to\mathbb R^2$
and the five potentials
$v_{\omega_k}:\mathbb R^2\to\mathbb R$
are fully connected networks with four hidden layers
of width $256$, ReLU activations, and linear output layers.
The map is deterministic and uses no residual connection.
All networks use Kaiming initialization with zero biases.

\textbf{Training.}
We use the scaled quadratic transport cost
$c(x,y)=\tau\|x-y\|_2^2$ with $\tau=10^{-3}$
and the conjugate functions
\[
\bar\psi(u)=\bar\phi(u)
=2\log(1+\exp(u))-2\log 2.
\]
Training consists of $100{,}000$ iterations,
each comprising one update of the potentials
followed by one update of the shared map.
For the potential update, we independently sample
$512$ points from each source and a common batch
of $512$ target points.
Fresh source batches of the same size are sampled
for the map update.
Contributions from the five sources are averaged
with equal weights.

We regularize the potentials using an R1 penalty
on target samples with coefficient $\gamma=5$:
\[
\frac{\gamma}{2K}\sum_{k=1}^{K}
\mathbb E_{y\sim\mathbb P^\ast}
\|\nabla_y v_{\omega_k}(y)\|_2^2,
\qquad K=5.
\]

\textbf{Visualization.}
We use fixed evaluation batches of $2{,}048$ points
from each source and $2{,}048$ target points,
sampled independently of the training batches.
Figure~\ref{fig:swiss_roll} displays the outputs
of the final map, applied directly to source samples
without rejection sampling.
The same target sample is shown in every panel.

\subsection{CelebA Restoration Experiment}
\label{app:celeba_details}

\textbf{Data split.}
We use $202{,}592$ CelebA images at resolution
$64\times64$, pooling the original dataset partitions
and constructing a new split with random seed $0$.
We allocate $91{,}166$ images to the sources,
$91{,}167$ to the clean target, and $20{,}259$
to the test set, corresponding approximately
to a $45/45/10$ split.
The source images are further divided into five
disjoint subsets: $18{,}234$ images for bicubic
downsampling and $18{,}233$ for each remaining
degradation.
Each image belongs to only one subset.
The split is performed at the image level.

Only degraded source images and independently
sampled clean target images are used for restoration
training.
The clean counterparts of source images are not
provided as supervision.
Paired degraded and clean test images are used
exclusively for evaluation.

\textbf{Degradations.}
We construct the five source distributions using
the following transformations:
\begin{itemize}
    \item \textit{Bicubic downsampling.}
    Images are resized from $64\times64$ to
    $16\times16$ and back to $64\times64$,
    using bicubic interpolation at both stages.

    \item \textit{Bilinear downsampling.}
    The same resizing procedure is applied
    using bilinear interpolation.

    \item \textit{JPEG compression.}
    Images are encoded and decoded using
    JPEG quality $25$.

    \item \textit{Gaussian blur.}
    We apply a Gaussian filter with standard
    deviation $1.5$ and kernel size $13\times13$.

    \item \textit{Gaussian noise.}
    Independent Gaussian noise with standard
    deviation $0.05$ is added to pixel values
    in $[0,1]$, followed by clipping to $[0,1]$.
\end{itemize}
Both resizing operations use
\texttt{align\_corners=False} without antialiasing.
Network inputs are normalized to $[-1,1]$.
Noise realizations are sampled during training
and fixed for evaluation.

\textbf{Architectures.}
We parametrize the shared stochastic map
$T_\theta(x,z)$ using an NCSN++ generator,
with independent auxiliary noise
$z\sim\mathcal N(0,I_{100})$.
The generator uses a base channel width of $64$,
channel multipliers $(1,1,2,2,4,4)$,
two residual blocks per resolution,
and attention at resolution $16\times16$.
The noise embedding has dimension $256$,
and the output uses a hyperbolic tangent activation.
The source index is not provided to the map.

Each of the five potentials is an independently
parametrized \texttt{Discriminator\_large} network
with base width $64$ and LeakyReLU activations
with negative slope $0.2$.

\textbf{Training.}
We use the scaled quadratic cost
$c(x,y)=\tau\|x-y\|_2^2$ with $\tau=10^{-3}$,
where the squared differences are summed over
all pixels and channels.
The conjugate functions are
\[
\bar\psi(u)=\bar\phi(u)
=2\log(1+\exp(u))-2\log 2.
\]
We train for $100{,}000$ iterations, alternating
one potential update with one map update.
Each potential update uses $16$ images from
each source and a common batch of $16$ clean
target images.
The map update uses fresh batches of $16$ images
per source.
Auxiliary noise is sampled independently for
each input, and source contributions are averaged
with equal weights.

We regularize the potentials using an R1 penalty
on clean target samples with coefficient $5$.
Adam optimizers use
$(\beta_1,\beta_2)=(0.5,0.9)$,
with initial learning rates of $2\cdot10^{-4}$
for the map and $10^{-4}$ for the potentials.
Cosine learning-rate schedulers are advanced
every $500$ training iterations, with
$T_{\max}=700$ scheduler steps and
$\eta_{\min}=10^{-5}$.
We initialize an exponential moving average
of the map parameters at iteration $30{,}000$
and subsequently update it with decay $0.999$.
All reported restoration results use the final
EMA checkpoints at iteration $100{,}000$.

\textbf{Baseline implementations.}
Pooled UOT uses one stochastic map and one potential
for the equally weighted mixture of the five sources.
Conditional UOT uses one map and one potential,
both conditioned on the degradation label.
A learned degradation embedding is added to
the generator's latent input and, for the potential,
to the features after the first convolutional layer.
The baselines use the same generator and potential
backbones, source and target split, transport cost,
and conjugate functions.
They are trained for the same number of iterations,
using $16$ source images per degradation at each step.

\textbf{Degradation classifier.}
For blind inference with Conditional UOT,
we separately train a five-class classifier
to predict the degradation family.
We split the $91{,}166$ source images into
$82{,}049$ training and $9{,}117$ validation images
using random seed $0$.
Each image is transformed using all five
training degradations, yielding $410{,}245$
training and $45{,}585$ validation examples.
The image split is performed before applying
the degradations.
The clean target subset and the $20{,}259$
test images are not used to train or select
the classifier.

The classifier is a compact ResNet-18-style network
with stage widths $(32,64,128,256)$,
two residual blocks per stage,
global average pooling, and dropout $0.1$
before the final classification layer.
It is trained from scratch for $100{,}000$ steps
using cross-entropy without label smoothing,
AdamW with learning rate $3\cdot10^{-4}$
and weight decay $10^{-4}$, and batch size $128$.
The learning rate follows a cosine schedule
with a final value of $3\cdot10^{-6}$.
We use random horizontal flips,
gradient clipping at norm $5$,
and mixed-precision training.
Degradation parameters remain fixed throughout training.

Validation is performed every $1{,}000$ steps.
We select the checkpoint with the highest validation
accuracy, using the lowest validation loss
to break ties.
The selected checkpoint is from step $63{,}000$.
At inference, the predicted class is supplied
to the conditional restoration model.

\textbf{Evaluation on training degradations.}
Each of the $20{,}259$ test images is randomly
assigned one degradation using seed $0$,
with $4{,}052$ images per degradation except
Gaussian noise, which has $4{,}051$.
This assignment is shared across methods.
We compute FID separately for the restored outputs
of each degradation.
Mean and maximum FID refer to the arithmetic mean
and maximum of these five source-specific values.
PSNR is evaluated against the corresponding
clean reference images and averaged over
all test images.

\textbf{Evaluation under parameter shifts.}
We keep all trained models fixed and change
one degradation parameter at a time.
For each setting, the corresponding degradation
is applied to all $20{,}259$ test images.
The evaluated parameters are
\[
\begin{array}{lcc}
\text{Degradation} & \text{Training} & \text{Evaluation} \\
\hline
\text{Bicubic factor} & 4 & 3,\;5 \\
\text{Bilinear factor} & 4 & 3,\;5 \\
\text{JPEG quality} & 25 & 30,\;20 \\
\text{Blur standard deviation} & 1.5 & 1.25,\;1.75 \\
\text{Noise standard deviation} & 0.05 & 0.04,\;0.06
\end{array}
\]
For downsampling factor $s$, the intermediate
resolution is
$\lfloor64/s\rfloor\times\lfloor64/s\rfloor$,
followed by resizing back to $64\times64$
with the same interpolation mode.
Thus, factors $\times3$ and $\times5$ correspond
to intermediate resolutions $21\times21$
and $12\times12$, respectively.

We report FID, PSNR, and LPIPS.
LPIPS is computed using the AlexNet backbone
in TorchMetrics with inputs in $[-1,1]$.
For stochastic restoration, evaluation uses
$100$-dimensional Gaussian latent vectors
with fixed random seed.
Conditional UOT is evaluated both with predicted
labels and with the known degradation family.
The latter supplies only the family label,
not the modified degradation parameter.

\section{Limitations and Future Research Directions}
\label{app:limitations}

\textbf{Scaling to many sources.}
SimNOT uses a shared transport map and a separate potential
for each source distribution.
While the transport network remains unchanged as the number
of sources grows, maintaining and updating the potentials
increases the training memory and computational requirements.
Sharing parameters between potentials or sampling a subset
of sources at each training step could improve scalability.
These modifications would affect training only: inference
uses a single shared map regardless of the number of sources.

\textbf{Source grouping during training.}
Although SimNOT does not require paired source-target
samples, it assumes that training samples are grouped
by source distribution, so that each sample can be
associated with the corresponding potential.
This information allows each potential to evaluate
the matching objective for its corresponding source.
Such grouping is available in our image restoration setup,
where the degradation process is known during training,
but may be incomplete or unavailable in other applications.
Extending SimNOT to partially observed source membership
is a promising direction for future research.
Once trained, the map can be applied directly to new inputs
without specifying their source.

\end{document}